\newcommand{\F}{\mathcal{F}}
\newcommand{\Ln}{\mathcal{L}_{nll}}
\newtheorem{definition}{Definition}
\newcommand{\myfrac}[2]{\left.#1\middle/#2\right.}
\title{Active Learning for Domain Adaptation: An Energy-Based Approach}
\author{
    Binhui Xie,\textsuperscript{\rm 1} Longhui Yuan,\textsuperscript{\rm 1} Shuang Li,\textsuperscript{\rm 1, \thanks{Corresponding author.}} Chi Harold Liu,\textsuperscript{\rm 1} Xinjing Cheng,\textsuperscript{\rm 2, \rm 3} Guoren Wang\textsuperscript{\rm 1}
}
\begin{document}

\maketitle

\begin{abstract}
    Unsupervised domain adaptation has recently emerged as an effective paradigm for generalizing deep neural networks to new target domains. However, there is still enormous potential to be tapped to reach the fully supervised performance. In this paper, we present a novel active learning strategy to assist knowledge transfer in the target domain, dubbed active domain adaptation. We start from an observation that energy-based models exhibit \textit{free energy biases} when training (source) and test (target) data come from different distributions. Inspired by this inherent mechanism, we empirically reveal that a simple yet efficient energy-based sampling strategy sheds light on selecting the most valuable target samples than existing approaches requiring particular architectures or computation of the distances. Our algorithm, \textit{Energy-based Active Domain Adaptation (EADA)}, queries groups of target data that incorporate both domain characteristic and instance uncertainty into every selection round. Meanwhile, by aligning the free energy of target data compact around the source domain via a regularization term, domain gap can be implicitly diminished. Through extensive experiments, we show that EADA surpasses state-of-the-art methods on well-known challenging benchmarks with substantial improvements, making it a useful option in the open world. Code is available at \url{https://github.com/BIT-DA/EADA}.
\end{abstract}

\section{Introduction}
In recent years, we have witnessed great strides in diverse machine learning problems with the success of deep neural networks~\cite{alexnet}. At the moment, however, these leaps in performance come only when labeled data is abundant. This limits their usage in many practical applications, such as autonomous driving with massive unlabeled data~\cite{YogamaniWRNMVPO19} and medical diagnosis with high labeling cost~\cite{ronneberger2015u-net}. Moreover, even labeling all available data is not an excellent solution, as it's impossible to fully capture the way the world looks in a single dataset, let alone the fact that the test data rarely matches the data seen during training. Recognizing these challenges, domain adaptation (DA) has been studied extensively, which transfers models trained on a labeled source domain to an unlabeled target domain~\cite{survey,Tzeng_2015_Simultaneous,ADDA,DANN,CDAN,BousmalisSDEK17,DRCN,TSA,MCD}. The performance of DA, in spite of great success, often falls far behind that of supervised learning. In practice, it is feasible to obtain extra annotations for target data, but to save cost, it is better to select the most informative subset via active learning~\cite{prince2004does,hanneke2014theory,Bickel_2009_JMLR}.

While previous active learning studies drastically lower human annotation costs, they are impractical when test data are collected from out-of-distribution. How can we design an efficient and practical sampling strategy for domain adaptation? For one thing, it is essential to determine which target samples will, once labeled, boost the accuracy and generalization considerably. For another, it remains the boundary to explore how to effectively utilize limited labeled data from the target domain to perform adaptation. Aware of this need, researchers have developed an array of active domain adaptation (Active DA) methods~\cite{Chattopadhyay_2013_ICML,rai2010domain,AADA_WACV,Fu_2021_CVPR,CULE_2021_ICCV,ChanN07}. Prior works mainly focus on assessing how private each target data is according to the output of a domain discriminator or calculating its distance to the cluster centroids. However, these additional procedures either select target samples that are originally well aligned with the source domain or increase the computational overhead, which limits their capability. Therefore, a simple yet efficient solution is urgently desired.

\begin{figure*}[!htbp]
  \centering  
    \subfigure[Source Only]{
      \label{Fig_motivation_source}
      \includegraphics[width=0.23\textwidth]{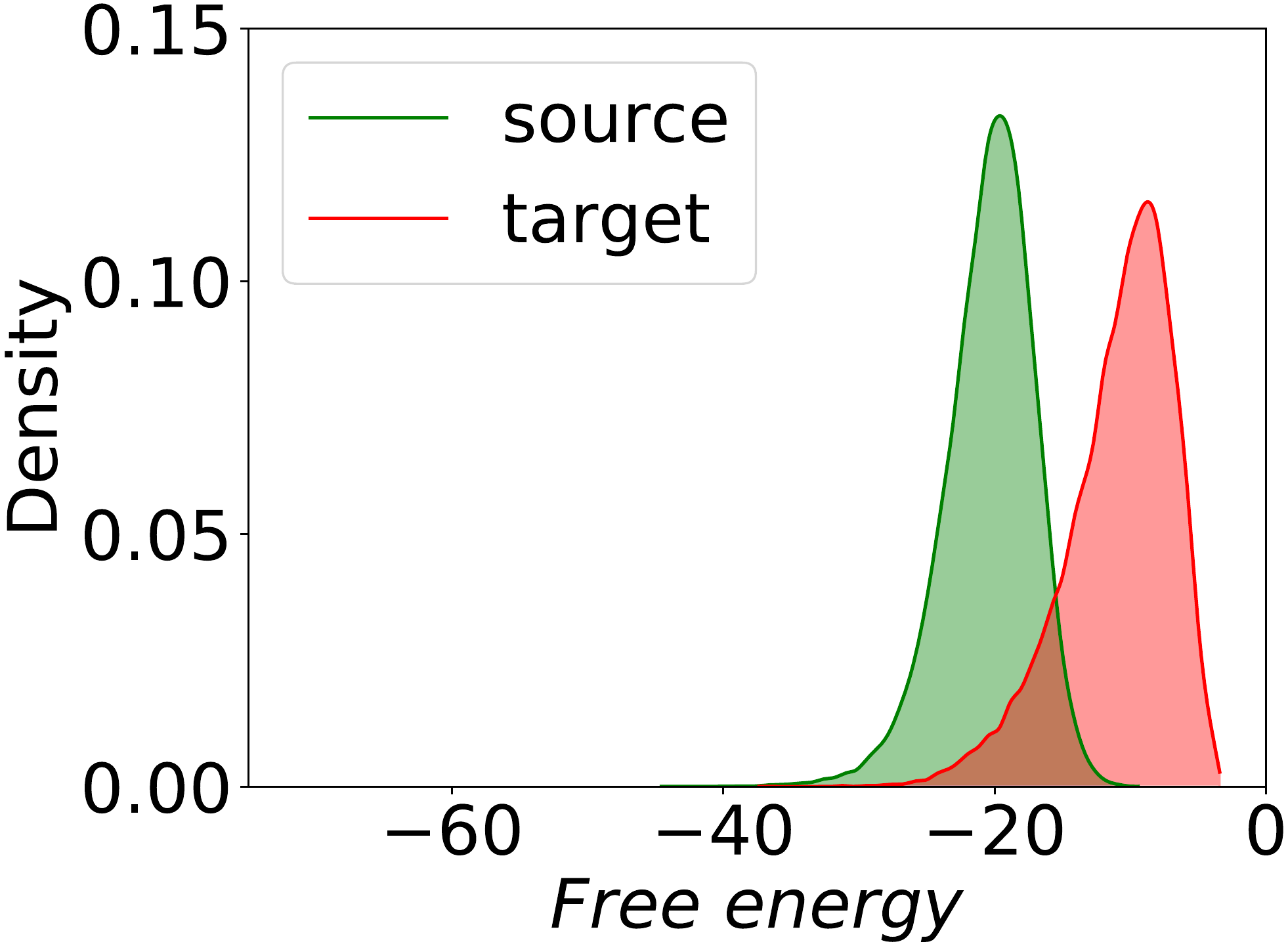}
    }
    \subfigure[Fully Supervised]{
      \label{Fig_motivation_full}
      \includegraphics[width=0.23\textwidth]{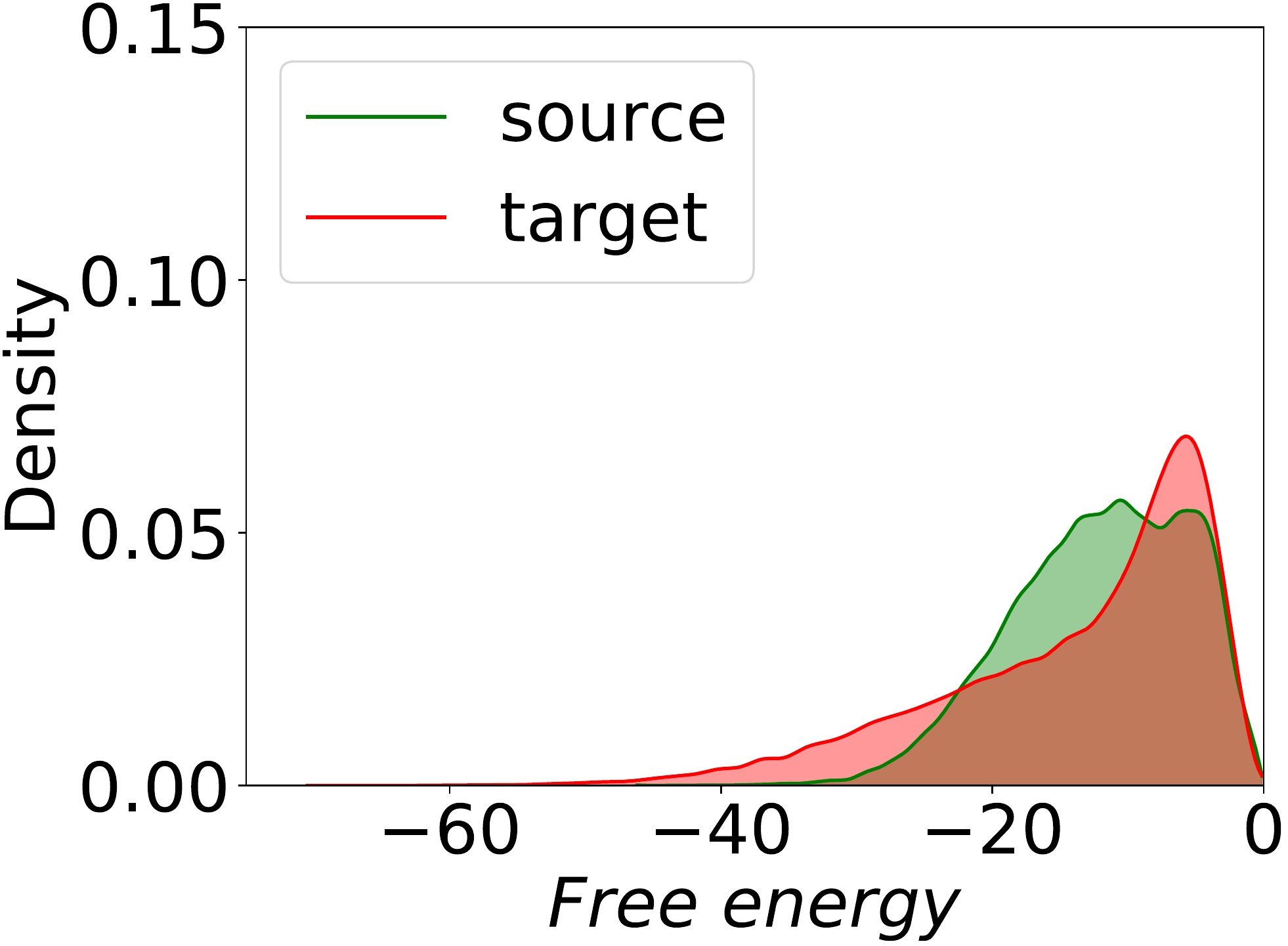}
    }
    \subfigure[Random]{
      \label{Fig_motivation_random}
      \includegraphics[width=0.23\textwidth]{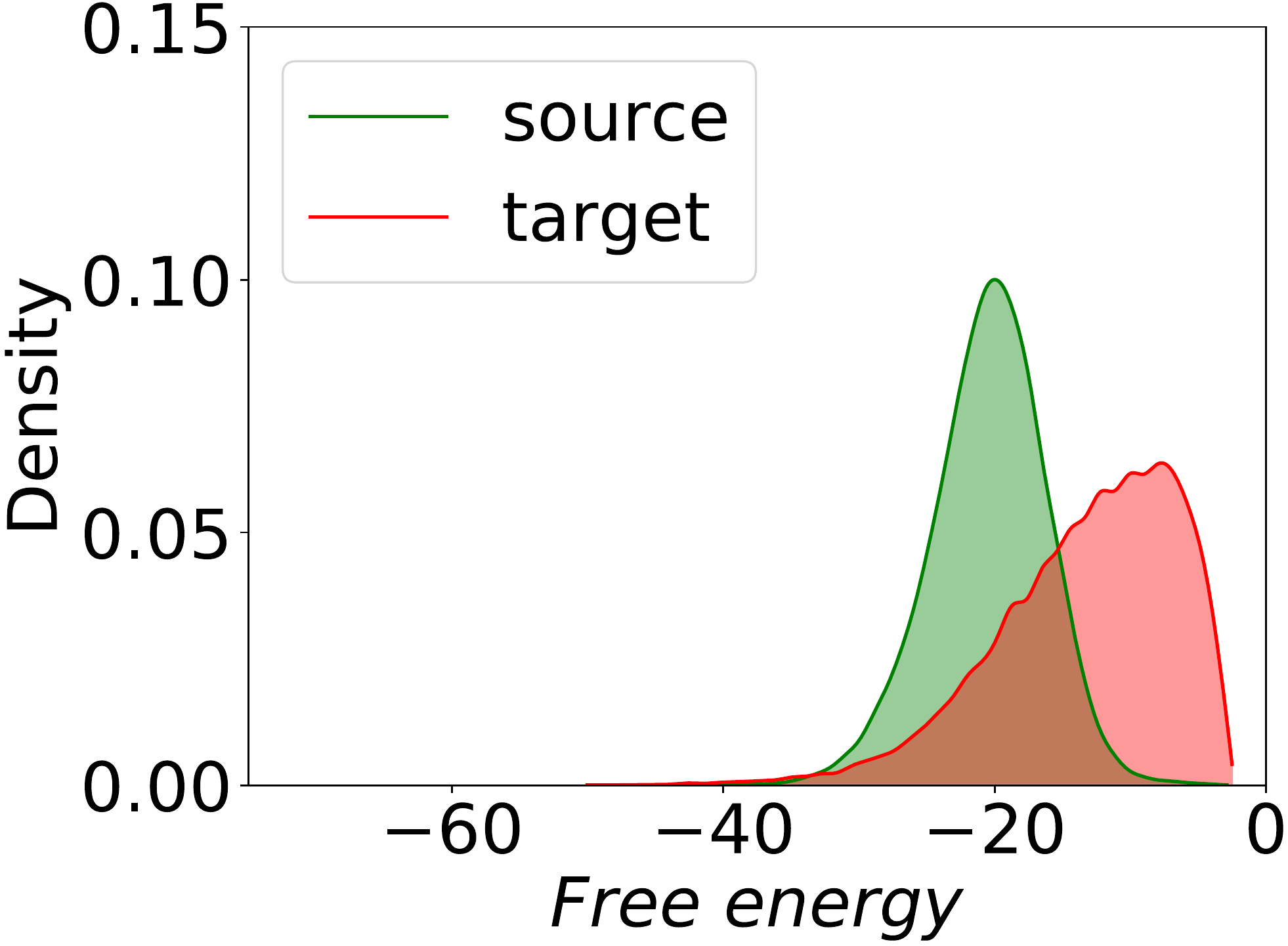}
    }
    \subfigure[EADA (Ours)]{
      \label{Fig_motivation_ours}
      \includegraphics[width=0.23\textwidth]{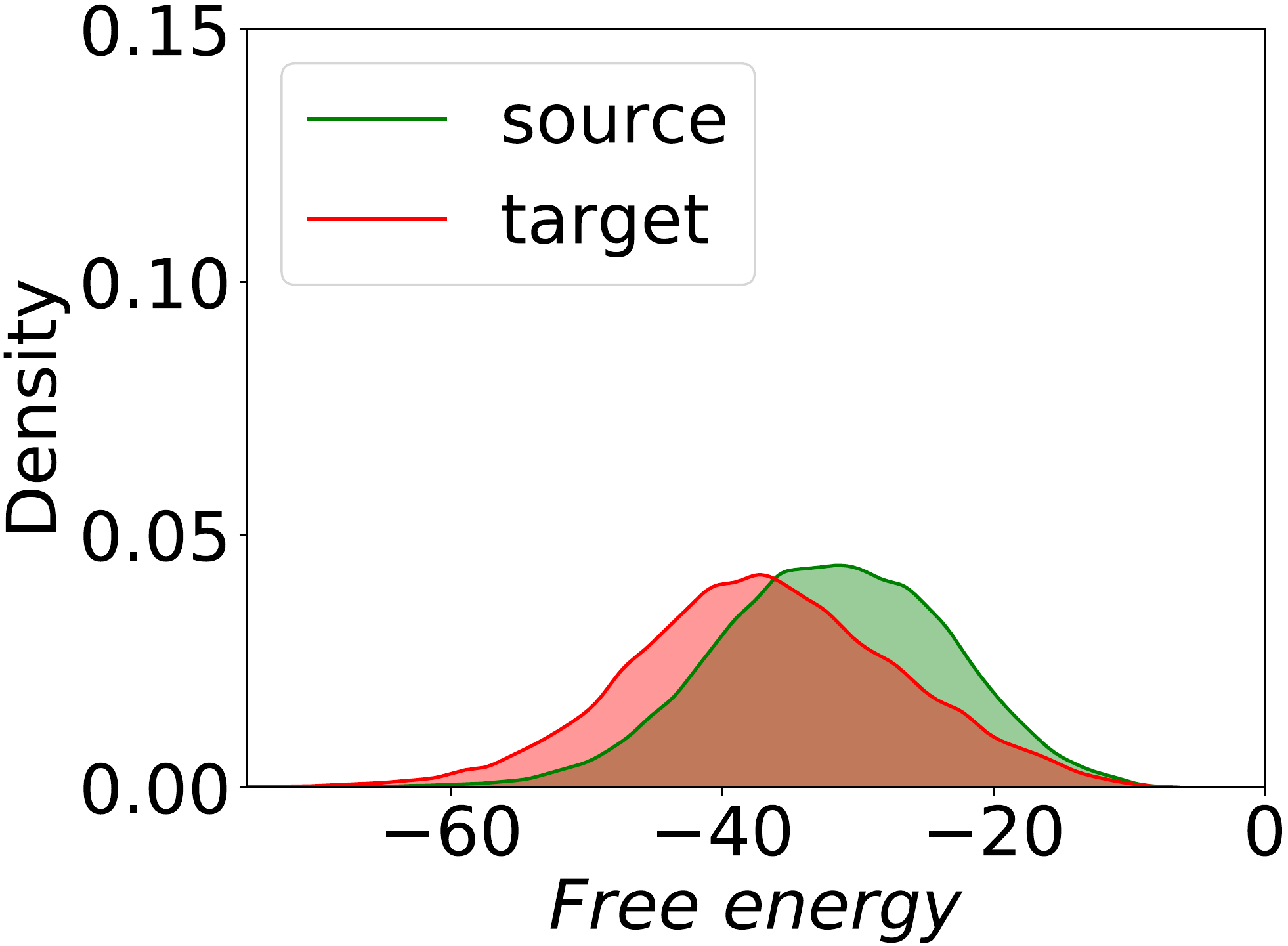}
    }
  \caption{(a \& b) Free energy distribution biases between source and target domains on VisDA-2017 from ``Source Only". We then contrast the distributions from (c) a native baseline of random selection and (d) our energy-based selection strategy. EADA exhibits better-aligned distribution than ``Random" and is similar to ``Full Supervised'' i.e., all source and target data are labeled.}
  \label{Fig_motivation}
\end{figure*}

In this paper, we advocate the use of energy-based models (EBMs)~\cite{lecun2006tutorial} to help realize the potential of active learning under domain shift. For any given $x$ (e.g., an image), an EBM approach gives the lowest energy to the correct answer $y$ (e.g., a label). \citet{Energy1} and \citet{Energy2} have demonstrated that energy-based training improves calibration and better distinguishes in- and out-of-distribution samples than the standard discriminative classifier. At this point, we begin with investigating the distributions of free energy on source and target domains using diverse methods and make several observations from Fig.~\ref{Fig_motivation}. First, a model trained only on labeled source data will cause the free energy distribution of the supervised source data to be lower than that of the unlabeled target data, that is, \textit{free energy biases} between the two domains (Fig.~\ref{Fig_motivation_source}). Then, an interesting finding is that these two distributions tend to be consistent using ``Full Supervised'' (Fig.~\ref{Fig_motivation_full}). Next, the biases eliminate slightly when a few unlabeled target data are randomly annotated in the training process (Fig.~\ref{Fig_motivation_random}). Lastly, using our algorithm to identify limited target instances for labeling, surprisingly, it well matches two distributions as same as the situation of ``Full Supervised" (Fig.~\ref{Fig_motivation_ours}). We conjecture that there exist redundant or trivial data in the target domain itself that do little to help the learning objective. 

Intuitively, we provide both mathematical insights and empirical evidence that an energy-based active learning scheme is desirable for active domain adaptation. A central theme of this work is that we design an approach, Energy-based Active Domain Adaptation (EADA), which adequately ensures samples that are representative of the entire target domain to be selected by considering both domain characteristic and instance uncertainty. More precisely, as mentioned above, the free energies of most labeled source data are lower than that of unlabeled target data. Thus, we can treat the intrinsic free energy of an unlabeled target sample as a surrogate metric to reflect the domain characteristic. Naturally, the target samples with higher free energy are more dissimilar to source data, and thus be typical for target distribution. In addition, we assess the value of minimum energy versus second-minimum energy (MvSM) for each unlabeled target data to quantify its uncertainty under the current model. To this end, given the labeling budget in each round, we first maintain a candidate set from unlabeled target data with higher free energies and then select samples with significant MvSM values from candidates. Furthermore, free energy can also serve as a regularization signal in the form of an alignment loss to implicitly diminish domain shift, which is complementary to our active strategy. 

In summary, our work makes the following contributions:
\begin{itemize}
    \item We provide a new perspective to select a highly informative subset of unlabeled target data under domain shift via exploiting \textit{free energy biases} between the two domains.
    \item We complement empirical results with theoretical investigations in the method section and establish an intuitive sufficient condition when it would  help. 
    \item Though simple, EADA attains excellent results with quite limited labeling expenses. Extensive experiments and in-depth analysis demonstrate its effectiveness.
\end{itemize}

\section{Related Work}\label{sec:related}
\subsubsection{Active learning (AL)} has been studied for decades in both theory and practice~\cite{settles2009active,active2011Sanjoy,bachman2017learning,UncertaintyBased1}. A case in point is to search informative data for labeling in order to learn a satisfactory model at a low annotation cost. Most popular algorithms formulate and solve it by uncertainty sampling. They select samples about which the current model is uncertain~\cite{UncertaintyBased2,BvSB_2009_CVPR,entropy_2014_IJCNN}. Another line of work turns to representative sampling~\cite{CoreSet_2019_ICLR,RepresentativeBased2,RepresentativeBased3}, which picks a set of typical samples via clustering or core-set selection. 

Recently, several studies have leveraged a hybrid of the above active sampling objectives to achieve promising results, such as \citet{BADGE_2020_ICLR}. However, these conventional AL methods cannot deal with the domain shift issues for domain adaptation, whereas our method aims to overcome this challenge by leveraging a simple energy-based strategy. 

\subsubsection{Domain adaptation (DA)} studies the task of transferring knowledge gained from a labeled source domain to a target domain where annotations are scarce~\cite{DANN,JAN,DAN-PAMI,AFN2019Xu,DICD,DCAN,BCDM,Hoffman_CyCADA,zou2021EADA,GFK2012Gong}. A series of works minimizes the domain gap at the uppermost layer of deep networks using maximum mean discrepancy~\cite{MMD} or adversarial training~\cite{GAN}. Recently, some methods allow a few target data labeled, e.g., semi-supervised DA~\cite{saito2019MME} and few-shot DA~\cite{FSDA}. Though impressive, they randomly select a few data to annotate, neglecting which target samples should be labeled given a fixed labeling budget. Consequently, some selected samples are originally well predicted by the current model. In contrast, our work differentiates itself by allowing the model to acquire labels for valuable target samples via an oracle. As such, it would have the best potential performance gain compared with randomly picking labels. 

\subsubsection{Active domain adaptation (Active DA).} The seminal work~\cite{rai2010domain} has demonstrated the synergy between AL and DA, which facilitates AL in a domain of interest with the aid of the knowledge from a related domain. Recently, \citet{AADA_WACV} and \citet{Fu_2021_CVPR} incorporate Active DA with advanced tools, such as adversarial training, both of which identify domainness via a learned domain discriminator. However, it may give identically high scores to most target data, thus not adequately ensuring that selected samples are representative of the entire target distribution. A parallel line of work instead proposes to select active samples via clustering. For example, \citet{CULE_2021_ICCV} cluster deep embeddings of target data weighted by the uncertainty and select nearest neighbors to the inferred cluster centroids for labeling. However, clustering-based strategies have some drawbacks in nature. First, they encounter a computational burden and could hardly be applied on large data sets. Second, the clustering is sensitive to noise and easy to collapse.  

Originating from energy-based models, our method adapts the concept of energy to identify limited target samples that are most unique to the target distribution and meanwhile complementary to labeled source data. It yields a new sampling protocol that accounts for domain characteristic and instance uncertainty together. Also, it has no extra parameters that need to be optimized and learning is efficient.

\section{Method}\label{sec:method}
\begin{figure*}
    \centering
    \includegraphics[width=0.96\textwidth]{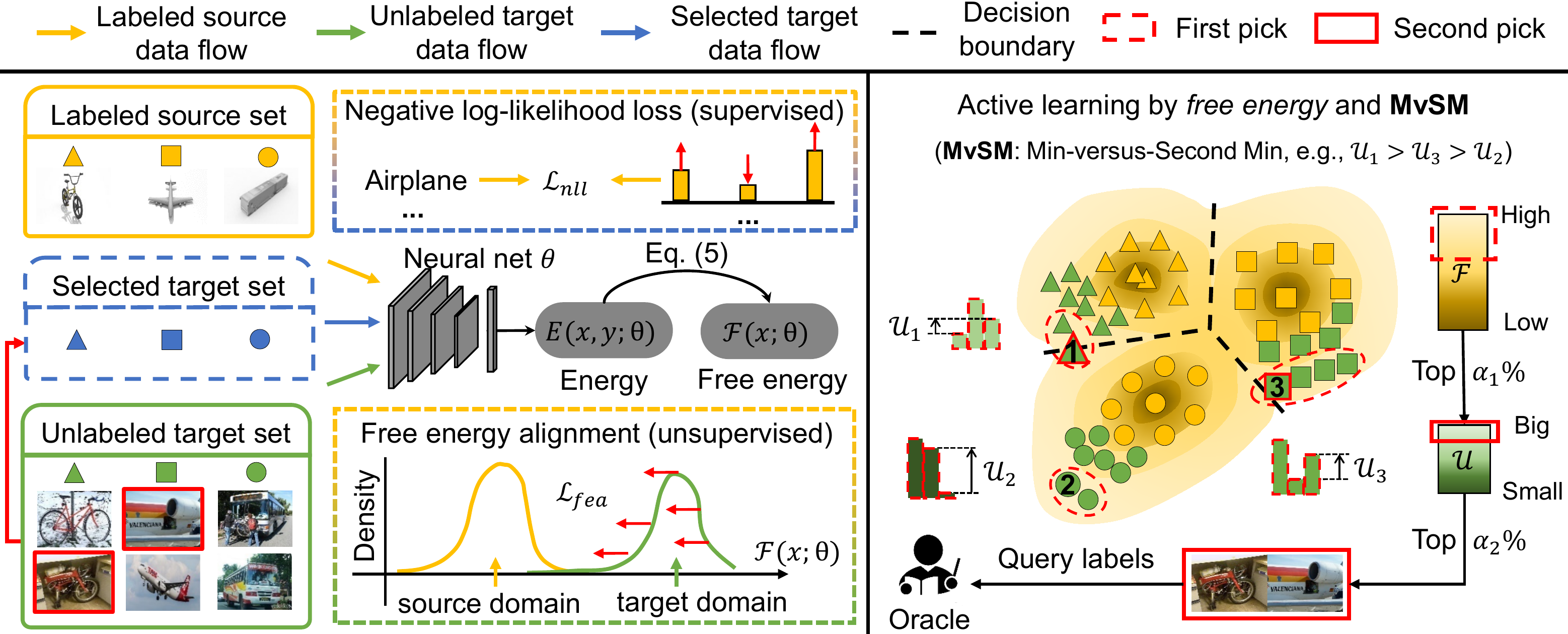}
    \caption{Overview of the EADA. \textit{Left (training process)}: we utilize the standard negative log-likelihood loss in conjunction with the proposed free energy alignment loss to train the network; \textit{Right (selection process)}: to iteratively build a labeled target set, 1\% of target samples are quired to annotate in each selection round. We first select a set of $\alpha_1\%$ candidates with the highest free energy (domain characteristic). Then we sample $\alpha_2\%$ points from candidates with biggest MvSM (instance uncertainty).}
    \label{Fig_framework}
\end{figure*}

In Active DA, we have access to a labeled source domain $\mathcal{S} = \{(x_s,y_s)\}$ and an unlabeled target domain $\mathcal{T} = \{x_t\}$ from different distributions. Following the standard Active DA setting~\cite{Fu_2021_CVPR,CULE_2021_ICCV}, $B$ active samples that are much smaller than the amount of $\mathcal{T}$ are selected for annotating. Thus, the entire target domain consists of a labeled pool $\mathcal{T}_{l}$ and an unlabeled pool $\mathcal{T}_{u}$, i.e., $\mathcal{T}=\mathcal{T}_{l} \cup \mathcal{T}_{u}$. The goal is to learn a neural network with parameter $\theta$ that brings good generalization on the target domain. In this work, we introduce an energy-based strategy to select the most valuable target data to assist the adaptation.

\subsection{Energy-based Models Revisit}
The essence of machine learning is to encode dependencies between variables. Let us consider an energy-based model (EBM) with two sets of variables $x$ (a high-dimensional variable) and $y$ (a discrete variable). Training this model consists in finding an energy function i.e., $E(x,y)$ that gives the lowest energy to correct answer and higher energy to all other (incorrect) answers\footnote{See ~\cite{lecun2006tutorial} for a comprehensive tutorial.}. Precisely, the model must produce the value $y^*$ for which $E(x,y)$ is the smallest:
\begin{small}
    \begin{equation}
        \begin{aligned}
            y^* = {\arg\min}_{y\in \mathcal{Y}} E(x, y) \,.
        \end{aligned}
    \end{equation}
\end{small}%
Generally, the size of set $\mathcal{Y}$ is small for classification, hence the inference procedure can simply compute $E(x, y)$ for all possible values of $y\in \mathcal{Y}$ and pick the smallest.

With the energy function, the joint probability of input $x$ and label $y$ can be estimated through the Gibbs distribution:
\begin{small}
    \begin{equation}
        \begin{aligned}
        p(x,y)=\myfrac{\exp\left(-E(x,y) \right)}{Z} \,,
    \end{aligned}
    \end{equation}
\end{small}%
where $Z=\sum_{x\in\mathcal{X}} \sum_{y\in\mathcal{Y}}\exp\left(-E(x,y)\right)$ is called the partition function that marginalizes over $x$ and $y$. It should be noted that the above transformation of energy into probability is only possible if $Z$ converges.
By marginalizing out $y$, we obtain the probability density for $x$ as well,
\begin{small}
\begin{equation}
        \begin{aligned}
    p(x) = \sum\nolimits_{y\in \mathcal{Y}}p(x,y) = \myfrac{\sum\nolimits_{y\in\mathcal{Y}}\exp\left(-E(x,y)\right)}{Z} .
\end{aligned}
\label{eq:px_energy}
    \end{equation}
\end{small}%

Intuitively, in Active DA, to select the most representative target samples, one can directly estimate the probability of occurrence for each target sample from Eq.~\eqref{eq:px_energy} and then those samples with lower probabilities should be selected. 

Unfortunately, one cannot compute or even reliably estimate $Z$. Therefore, we turns to \textit{free energy} i.e., $\mathcal{F}(x)$, a function hidden in EBMs that serves as the ``rationality" of the occurrence of the variable $x$. Mathematically, the probability density for $x$ can also be expressed as
\begin{small}
\begin{equation}
    \begin{aligned}
    p(x)=\frac{\exp\left(-\mathcal{F}(x)\right)}{\sum_{x\in\mathcal{X}}\exp(-\mathcal{F}(x))}\,.
    \label{eq:px_free_energy}
    \end{aligned}
\end{equation}
\end{small}%
This formulation indicates that $\mathcal{F}(x)$ could be substituted for $p(x)$ to select the target samples that have lower probabilities. By connecting Eq.~\eqref{eq:px_energy} and Eq.~\eqref{eq:px_free_energy}, we have
\begin{small}
\begin{equation}
        \begin{aligned}
    \mathcal{F}(x)=-\log\sum_{y\in\mathcal{Y}}\exp\left(-E(x,y)\right)\,.
    \label{eq:active_domain}
\end{aligned}
    \end{equation}
\end{small}%

\subsection{Energy-based Active Domain Adaptation}
We take advantage of a new perspective of EBMs to gain the benefits for active domain adaptation, where \textit{free energy biases} between source and target data allow effective selection and adaptation. In the following, we first describe how to train an EBM with several loss functions. Then we show how to query and annotate the most informative unlabeled target data via an energy-based sampling strategy. At last, we provide an intuitive sufficient condition when it helps. 

\subsubsection{Training process} 
Given a set of labeled source samples $\mathcal{S} = \{(x_s,y_s)\}$, we want to train a well-behaved EBM that gives the lowest energy to the correct answer and higher energy to all other (incorrect) answers. To this end, we utilize a commonly used loss in EBMs, i.e., the negative log-likelihood loss that comes from probabilistic modeling to train a model for classification, and it can be formulated as
\begin{small}
    \begin{equation}
        \begin{aligned}
        \mathcal{L}_{nll}(x,y;\theta) = E(x,y;\theta) + \frac{1}{\tau} \log\sum_{c\in\mathcal{Y}} \exp\left(-\tau E(x,c;\theta)\right),
    \end{aligned}
    \end{equation}
\end{small}%
where $\tau$ ($\tau > 0$) is the reverse temperature and a low value corresponds to smooth partition of energy over the space $\mathcal{Y}$. For simplicity, we fix $\tau$=1, and then we have
\begin{small}
    \begin{equation}
        \begin{aligned}
        \mathcal{L}_{nll}(x,y;\theta) = E(x,y;\theta) - \mathcal{F}(x;\theta) \,.
        \label{eq:nll_energy_mine_free_energy}
    \end{aligned}
    \end{equation}
\end{small}%
The second term in Eq.~\eqref{eq:nll_energy_mine_free_energy} will cause the energies of all answers to be pulled up. The energy of the correct answer is also pulled up, but not as hard as it is pushed down by the first term. An analysis of gradient is presented in Appendix\footnote{Appendix can be found at \url{https://arxiv.org/abs/2112.01406}.}.

However, we observe that the values of free energy on target samples are considerably higher than those on source ones, called \textit{free energy biases}. Naturally, one can treat it as a surrogate to reflect the domain divergence. By designing a simple regularization term, these biases can be reduced, which to some extent aligns the distribution across domains. And the free energy alignment loss $\mathcal{L}_{fea}$ is defined as:
\begin{small}
    \begin{equation}
        \begin{aligned}
        \mathcal{L}_{fea}(x;\theta) = \max \left(0, \mathcal{F}(x;\theta) - \Delta\right),
    \end{aligned}
    \end{equation}
\end{small}%
where $\Delta = \mathbb{E}_{x\sim \mathcal{S}} \mathcal{F}(x;\theta)$ is the average value of the free energy over source data. During training, $\Delta$ is estimated via exponential moving average: $ \Delta_t = \lambda \Delta_{t-1} + (1 - \lambda) \Delta'_t$, where $\Delta_t$ is the estimation of average value in all $t$ mini-batches and $\Delta'_t$ is the average value in $t^{th}$ mini-batch and $\lambda$ is a weight sampled from the uniform distribution, $\lambda\sim U(0,1)$. Additionally, we experimentally found that such way is comparable with calculating average value over the whole source domain data while improving the efficiency.

Overall, the full learning objective is given by:
\begin{small}
    \begin{equation}
        \begin{aligned}
        \min_{\theta} \mathbb{E}_{(x,y)\sim\mathcal{S}\cup\mathcal{T}_{l}}\mathcal{L}_{nll}(x,y;\theta) + \gamma \mathbb{E}_{x\sim \mathcal{T}_u}\mathcal{L}_{fea}(x;\theta),
        \label{eq:overall}
    \end{aligned}
    \end{equation}
\end{small}%
where $\gamma$ is a loss weight hyperparameter. 

\subsubsection{Selection process} The goal in Active DA is to identify more valuable target samples that, once labeled and used for training, improve the model's accuracy and generalization performance significantly. In practice, we suggest a two-step sampling strategy to adequately ensure such samples by incorporating domain characteristic and instance uncertainty. To be clear, we summarize the training and selection processes based on the above discussion as Algorithm~\ref{alg:algorithm}.

\begin{algorithm}[t]
    \begin{algorithmic}[1]
    \caption{\textbf{EADA algorithm}}
    \label{alg:algorithm}
    \STATE \textbf{Input:} Labeled source data $\mathcal{S}$, unlabeled target data $\mathcal{T}_u$ and labeled target set $\mathcal{T}_l = \emptyset$, maximum epoch $M$, selection rounds $R$, selection ratios $\alpha_1\,,\alpha_2$ \\
    \FOR{$m=1$ to $M$}
    \STATE Update model $\theta_m$ via Eq.~\eqref{eq:overall} 
    \IF{$m$ in $R$} 
    \STATE $\forall x \in \mathcal{T}_u$, compute free energy $\mathcal{F}(x)$ (Eq.~\eqref{eq:active_domain}) to serve as measure of domain characteristic
    \STATE $\mathcal{T}_l^r \leftarrow$ select $\alpha_1\%$ of $\mathcal{F}$ with the highest values
    \STATE $\forall x \in \mathcal{T}_l^r$, compute MvSM~~$\mathcal{U}(x)$ (Eq.~\eqref{eq:active_uncertainty}) to serve as measure of instance uncertainty
    \STATE $\mathcal{T}_l^r \leftarrow$ select $\alpha_2\%$ of $\mathcal{U}$ with the highest values as active samples for annotating, getting $\mathcal{T}_l=\mathcal{T}_l \cup \mathcal{T}_l^r$
    \ENDIF
    \ENDFOR
    \STATE \textbf{Output:} Final model parameters $\theta_M$
    \end{algorithmic}
\end{algorithm}

{\bf Step one}: we observe that biases of free energy distribution between source and target domains exhibit. Thus, we can utilize this intrinsic free energy of an unlabeled target sample as a surrogate metric to reflect the domain characteristic. Certainly, the target samples with higher free energy are unique to the target distribution and meanwhile complementary to the labeled source data.

{\bf Step two}: to measure instance uncertainty, existing methods rely primarily on the entropy score~\cite{AADA_WACV,CULE_2021_ICCV}. In contrast, we consider the difference between the energy values of the two answers with the lowest estimated energy value as a measure of uncertainty. Since it is a comparison of the minimum answer and the second minimum answer, we refer to it as the Min-versus-Second-Min (MvSM) strategy and it can be formulated as 
\begin{small}
    \begin{equation}
        \begin{aligned}
        \mathcal{U}(x) = E(x, y^*;\theta) - E(x, y';\theta)\,,
    \end{aligned}
    \label{eq:active_uncertainty}
    \end{equation}
\end{small}%
where $y^* = \arg\min_{y\in\mathcal{Y}} E(x, y;\theta)$ is the lowest energy output and $y' = \arg\min_{y\in\mathcal{Y}\backslash \{y^*\}} E(x,y;\theta)$ is the second-lowest energy output. Such a measure is a more direct way of estimating confusion about class membership from a classification standpoint. Using the MvSM measure, the instances around the decision boundaries in Fig.~\ref{Fig_framework} during the selection procedure will be selected to query an oracle.

\subsection{Theoretical Analysis}
\label{sec:theoretical}
This section contains our preliminary study of why \textit{free energy biases} exhibit between two different domains. For an energy-based model, we prove that positive gradient inner product between the negative log-likelihood loss function and \textit{free energy} leads to a lower value of \textit{free energy} on labeled source samples during the training process. Limited by space, all the proofs are left for the Appendix.

Before stating our main theoretical result, we first illustrate the general intuition with a toy problem. Considering a simple energy-based model on the classification task, where the network is a one layer linear network parameterized by $\mathbf{W} 
= \begin{pmatrix}
    \omega_1 & \cdots & \omega_C
\end{pmatrix}^{\top} \in \mathbb{R}^{C \times N}$, $x\in\mathbb{R}^N$ denotes a source sample, $y\in\{1,...,C\}$ denotes the label, we have 
\begin{equation}
  \begin{aligned}
    E(x,j;\mathbf{W}) &= \omega_{j}^{\top} x,~j=1,...,C ,\\
    \mathcal{F}(x;\mathbf{W}) &= -\log\sum_{c = 1}^{C} \exp(-\omega_{c}^{\top} x) ,\\
    \mathcal{L}_{nll}(x, y;\mathbf{W}) &= E(x, y;\mathbf{W})-\mathcal{F}(x;\mathbf{W}) .
  \end{aligned}
\end{equation}
Now we update the the weight matrix $\mathbf{W}$ by one step of gradient descent on $\mathcal{L}_{nll}$ as follows:
\begin{small}
      \begin{equation}
        \mathbf{W}^{\prime} = \mathbf{W} - \eta \nabla \mathcal{L}_{nll}(x, y; \mathbf{W}) ,
    \end{equation}
\end{small}%
where $\eta$ is the learning rate and $\mathbf{W}^{\prime }$ is the updated matrix. 

Then we have two lemmas to show that the inner product between the gradients of negative log-likelihood loss function and free energy is positive, and the value of the free energy of a labeled source sample is descending with a step of gradient descent on negative log-likelihood loss function.
\newtheorem{lemma}{Lemma}
\begin{lemma}\label{lemma_toy_gradient}
    Assume that a toy model correctly predicts a labeled source sample $(x, y)$, we have 
      \begin{equation}
        \left\langle \nabla \mathcal{L}_{nll}(x, y;\mathbf{W}), \nabla \mathcal{F}(x;\mathbf{W})\right\rangle  > 0 , 
    \end{equation}
    where $\left\langle \cdot, \cdot \right\rangle $ denotes the inner product of gradients.
\end{lemma}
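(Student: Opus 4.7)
The plan is to reduce the inner product to an explicit algebraic expression in the softmax-style probabilities associated with the energies, and then invoke a standard ``sum of squares of a probability vector is bounded by its maximum'' inequality.

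First, I would compute both gradients row-by-row. Since $E(x,j;\mathbf{W})=\omega_j^{\top}x$ depends on $\omega_j$ only when $j=y$, we get $\nabla_{\omega_j} E(x,y;\mathbf{W})=\mathbbm{1}[j=y]\,x$. Differentiating the log-sum-exp in $\mathcal{F}$ gives $\nabla_{\omega_j}\mathcal{F}(x;\mathbf{W}) = p_j(x)\,x$, where
\begin{equation}
p_j(x) \;=\; \frac{\exp(-\omega_j^{\top}x)}{\sum_{c=1}^{C}\exp(-\omega_c^{\top}x)}
\end{equation}
is a genuine probability distribution over $\{1,\dots,C\}$. Consequently $\nabla_{\omega_j}\mathcal{L}_{nll}=(\mathbbm{1}[j=y]-p_j(x))\,x$.

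Second, I would take the Frobenius inner product over all $C$ rows. Because each row gradient is a scalar multiple of the same vector $x$, the computation collapses to
\begin{equation}
\langle\nabla\mathcal{L}_{nll},\nabla\mathcal{F}\rangle \;=\; \|x\|^{2}\sum_{j=1}^{C}\bigl(\mathbbm{1}[j=y]-p_j(x)\bigr)p_j(x) \;=\; \|x\|^{2}\Bigl(p_y(x)-\sum_{j=1}^{C}p_j(x)^{2}\Bigr).
\end{equation}
So positivity of the inner product is equivalent to the strict inequality $p_y(x)>\sum_j p_j(x)^{2}$, together with $x\neq 0$ (which I would note as a mild non-degeneracy assumption on the input).

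Third, I would invoke the elementary bound $\sum_j p_j^{2}\le(\max_j p_j)\sum_j p_j=\max_j p_j$, valid for any probability vector. The ``correctly predicts'' hypothesis says $y=\arg\min_{c} E(x,c;\mathbf{W})$, and interpreting this as a strict argmin yields $\omega_y^{\top}x<\omega_c^{\top}x$ for every $c\neq y$, hence $p_y(x)=\max_j p_j(x)$ and $p_y(x)>p_c(x)$ for some $c$. Because the softmax assigns strictly positive mass to every class, the bound above is strict (equality would force $p_c=p_y$ for all $c$ with $p_c>0$, i.e., a uniform distribution, contradicting the strict maximum at $y$). This yields $p_y(x)>\sum_j p_j(x)^{2}$ and therefore $\langle\nabla\mathcal{L}_{nll},\nabla\mathcal{F}\rangle>0$.

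The main obstacle is really a bookkeeping point rather than a technical one: it is making the ``correctly predicts'' assumption carry strict rather than non-strict argmin, and ruling out the degenerate case $x=0$ (in which both gradients vanish and the inner product is zero). Once these are pinned down, the remainder is a direct gradient computation plus the one-line probability-vector inequality; no convexity or optimization machinery is needed.
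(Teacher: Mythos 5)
Your proof is correct and takes essentially the same route as the paper's: both compute the row gradients as scalar multiples of $x$, collapse the inner product to $\|x\|^{2}\bigl(p(y|x)-\sum_{c}p(c|x)^{2}\bigr)$, and conclude from $p(y|x)$ being the strict maximum of the probability vector --- the paper merely factors this quantity as $\sum_{c\neq y}p(c|x)\bigl(p(y|x)-p(c|x)\bigr)$ instead of invoking the bound $\sum_{j}p_{j}^{2}\le\max_{j}p_{j}$, a cosmetic difference. Your explicit treatment of the $x=0$ degeneracy also matches the paper's remark that such samples are excluded from consideration.
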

\begin{lemma}\label{lemma_toy_step}
    Assume that a toy model correctly predicts a labeled source sample $(x, y)$ with learning rate $\eta > 0$
    we have 
      \begin{equation}
        \mathcal{F}(x;\mathbf{W}) > \mathcal{F}(x;\mathbf{W}^{\prime}).
    \end{equation}
\end{lemma}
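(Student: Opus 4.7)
The plan is to prove $\phi(1) < \phi(0)$ for the one-variable reduction $\phi(t) = \mathcal{F}(x; \mathbf{W}(t))$, where $\mathbf{W}(t) = \mathbf{W} - t\eta \nabla \mathcal{L}_{nll}(x, y; \mathbf{W})$ interpolates between $\mathbf{W}$ and $\mathbf{W}'$ for $t \in [0,1]$. By the chain rule, $\phi'(t) = -\eta \langle \nabla \mathcal{F}(x; \mathbf{W}(t)), \nabla \mathcal{L}_{nll}(x, y; \mathbf{W}) \rangle$, and at $t = 0$ Lemma~\ref{lemma_toy_gradient} immediately gives $\phi'(0) < 0$. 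My goal is to upgrade this to $\phi'(t) < 0$ on all of $(0,1]$, because then the fundamental theorem of calculus yields $\phi(1) - \phi(0) = \int_0^1 \phi'(t)\, dt < 0$.

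First I would write the gradients explicitly using the linear structure. Setting $p_j(t) = \exp(-\omega_j(t)^\top x)/\sum_c \exp(-\omega_c(t)^\top x)$, we have $\nabla_{\omega_j} \mathcal{F}(x; \mathbf{W}(t)) = p_j(t)\,x$ and $\nabla_{\omega_j} \mathcal{L}_{nll}(x, y; \mathbf{W}) = (\delta_{jy} - p_j(0))\,x$. Summing the row-wise inner products and using $\sum_j p_j(0) = 1$ to rewrite $p_y(t)$ as $\sum_j p_j(0)\,p_y(t)$, I expect to land at $\phi'(t) = -\eta\,\|x\|^2 \sum_{j \neq y} p_j(0)\,(p_y(t) - p_j(t))$, so the entire task reduces to establishing $p_y(t) > p_j(t)$ for every $j \neq y$ and every $t > 0$.

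To get this, I would track the pre-softmax energies along the path. The update gives $\omega_y(t)^\top x = \omega_y^\top x - t\eta(1 - p_y(0))\|x\|^2$ and $\omega_j(t)^\top x = \omega_j^\top x + t\eta\, p_j(0)\,\|x\|^2$ for $j \neq y$, so $\omega_j(t)^\top x - \omega_y(t)^\top x = (\omega_j^\top x - \omega_y^\top x) + t\eta(p_j(0) + 1 - p_y(0))\,\|x\|^2$. The first bracket is non-negative by correct prediction at $\mathbf{W}$, and the coefficient $p_j(0) + 1 - p_y(0)$ is strictly positive outside the degenerate case $p_y(0)=1,\ p_j(0)=0$ (in which $\nabla \mathcal{L}_{nll}$ itself vanishes and $\mathbf{W}' = \mathbf{W}$). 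Hence the energy gap in favor of $y$ widens strictly as $t$ grows, forcing $p_y(t) > p_j(t)$ on $(0,1]$, whence $\phi'(t) < 0$ there.

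The main obstacle is that Lemma~\ref{lemma_toy_gradient} is only a first-order statement at $\mathbf{W}$, while $\eta$ is not assumed small, so a naive Taylor expansion does not close the argument; I need the above monotonicity of $p_y(t) - p_j(t)$ along the whole step to promote a pointwise gradient inequality into a genuine drop in free energy. A secondary nuisance is the boundary case $p_y(0) = 1$: there the gradient step is trivial, and I would either invoke strict correct prediction or phrase the conclusion as $\mathcal{F}(x;\mathbf{W}) \geq \mathcal{F}(x;\mathbf{W}')$ with equality only when the step is vacuous.
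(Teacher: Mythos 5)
Your proof is correct, but it takes a genuinely different route from the paper's. The paper argues directly on the endpoint: it writes $\mathcal{F}(x;\mathbf{W})>\mathcal{F}(x;\mathbf{W}')$ as the equivalent inequality $\sum_{c}\exp(-E(x,c))<\sum_{c}\exp(-E'(x,c))$, substitutes the explicit post-step energies $E'(x,y)=E(x,y)-\eta(1-p(y|x))\lVert x\rVert^2$ and $E'(x,j)=E(x,j)+\eta p(j|x)\lVert x\rVert^2$, and then closes the argument with a purely algebraic auxiliary lemma about sums of exponentials (if $a>a_i$ and $b_i>0$ then $e^{a+\sum_i b_i}+\sum_i e^{a_i-b_i}>e^{a}+\sum_i e^{a_i}$, proved by a telescoping exchange). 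You instead integrate the directional derivative of $\mathcal{F}$ along the segment $\mathbf{W}(t)=\mathbf{W}-t\eta\nabla\mathcal{L}_{nll}$, reduce $\phi'(t)<0$ to the ordering $p_y(t)>p_j(t)$, and establish that ordering by showing the pre-softmax gap $\omega_j(t)^{\top}x-\omega_y(t)^{\top}x$ starts strictly positive (correct prediction) and only grows, since its increment has coefficient $p_j(0)+1-p_y(0)\ge 0$. Both arguments are sound; I checked your gradient formulas and the identity $\phi'(t)=-\eta\lVert x\rVert^2\sum_{j\neq y}p_j(0)\,(p_y(t)-p_j(t))$ against the paper's Eq.~\eqref{gradient_nllloss} and they match. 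What your approach buys is conceptual unity: it is literally ``Lemma~\ref{lemma_toy_gradient} holds at every point of the update segment with the frozen descent direction,'' and it foreshadows the integral-plus-smoothness technique the paper itself uses to prove Theorem~\ref{gradient_theorem}; the paper's version buys elementarity (no calculus along a path, just a finite exponential inequality). Two minor remarks: your worry about the degenerate case $p_y(0)=1$ is vacuous here, since softmax probabilities of finite energies lie strictly in $(0,1)$, so $p_j(0)>0$ and $p_j(0)+1-p_y(0)>0$ automatically; and, like the paper, you should explicitly exclude $\lVert x\rVert=0$, without which both proofs degenerate to equality.
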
%
\noindent To summarize, if the positive gradient inner product between the negative log-likelihood loss function and \textit{free energy}, free energy biases are exhibited. Our main theoretical results extend this to general deep neural networks.

\newtheorem{theorem}{Theorem}
\begin{theorem}\label{gradient_theorem}
    Let $\mathcal{L}_{nll}(x, y; \theta)$ denote the negative log-likelihood loss on source domain $(x,y)$ with parameters of deep network $\theta$ and $\mathcal{F}(x;\theta)$ denote the free energy of $x$. Assume that $\forall(x, y)$, $\mathcal{L}_{nll}(x, y; \theta)$ is differentiable, $\beta$-smooth in $\theta$ and $\forall \theta\,,\left\lVert \nabla \mathcal{L}_{nll}(x,y,\theta) \right\rVert < G, \left\lVert \nabla \mathcal{F}(x;\theta) \right\rVert < G $. With learning rate $\eta \in (0, \frac{2\varepsilon}{\beta G^2})$, and for every $(x, y)$ such that 
    \begin{equation}
        \left\langle \nabla\mathcal{L}_{nll}(x, y;\mathbf{\theta}), \nabla\mathcal{F}(x;\mathbf{\theta})\right\rangle  > \varepsilon ,
    \end{equation}
    where $\varepsilon > 0$, we have 
    \begin{equation}
        \mathcal{F}(x;\mathbf{\theta}) > \mathcal{F}(x;\mathbf{\theta}^{\prime}),
    \end{equation}
    where $\mathbf{\theta}^{\prime } = \mathbf{\theta} - \eta \nabla \mathcal{L}_{nll}(x, y; \mathbf{\theta})$ i.e., supervised training with one step of gradient descent, and $\left\langle \cdot, \cdot\right\rangle $ denotes the inner product of gradients.
\end{theorem}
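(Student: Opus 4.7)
The plan is to apply the descent lemma (a consequence of $\beta$-smoothness) to $\mathcal{F}$ at the point $\theta$, evaluated at the update $\theta'=\theta-\eta\nabla\mathcal{L}_{nll}(x,y;\theta)$, and then use the inner-product hypothesis to show that the first-order decrease dominates the quadratic error term whenever $\eta$ lies in the prescribed interval. Concretely, taking (as the setting implicitly requires, since $\mathcal{L}_{nll}=E-\mathcal{F}$) $\mathcal{F}$ to be $\beta$-smooth in $\theta$, smoothness gives the standard upper bound
\begin{equation}
\mathcal{F}(x;\theta') \le \mathcal{F}(x;\theta) + \langle \nabla\mathcal{F}(x;\theta),\,\theta'-\theta\rangle + \tfrac{\beta}{2}\|\theta'-\theta\|^{2}.
\end{equation}

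Next I would substitute $\theta'-\theta=-\eta\nabla\mathcal{L}_{nll}(x,y;\theta)$ to convert the linear term into an inner product between the gradients of $\mathcal{L}_{nll}$ and $\mathcal{F}$, giving
\begin{equation}
\mathcal{F}(x;\theta') \le \mathcal{F}(x;\theta) - \eta\,\langle\nabla\mathcal{L}_{nll}(x,y;\theta),\nabla\mathcal{F}(x;\theta)\rangle + \tfrac{\beta\eta^{2}}{2}\|\nabla\mathcal{L}_{nll}(x,y;\theta)\|^{2}.
\end{equation}
Now I would apply the two quantitative assumptions: the inner product is bounded below by $\varepsilon$, and $\|\nabla\mathcal{L}_{nll}\|<G$. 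These yield
\begin{equation}
\mathcal{F}(x;\theta') - \mathcal{F}(x;\theta) \le -\eta\varepsilon + \tfrac{\beta\eta^{2}G^{2}}{2} \;=\; \eta\!\left(\tfrac{\beta\eta G^{2}}{2}-\varepsilon\right).
\end{equation}
The final step is to observe that the bracketed factor is strictly negative exactly when $\eta<\tfrac{2\varepsilon}{\beta G^{2}}$, which is precisely the admissible range of learning rates; since $\eta>0$ as well, the right-hand side is strictly negative and we conclude $\mathcal{F}(x;\theta)>\mathcal{F}(x;\theta')$, as desired.

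The main obstacle I expect is not the algebra but a subtle bookkeeping point: the hypothesis as stated posits $\beta$-smoothness of $\mathcal{L}_{nll}$, whereas the descent-lemma step needs smoothness of $\mathcal{F}$. I would address this by noting that, via $E=\mathcal{L}_{nll}+\mathcal{F}$ together with the implicit regularity of $E$ (or by reading the assumption as applying to every term of the decomposition, as the single symbol $\beta$ in the conclusion suggests), $\mathcal{F}$ inherits a smoothness constant of the same order; absorbing any multiplicative factor into $\beta$ preserves the form of the admissible interval. Aside from this, the argument is a clean one-step descent computation: the positivity of the gradient alignment supplies a guaranteed first-order decrease of $\mathcal{F}$, and the learning-rate upper bound ensures the quadratic smoothness penalty cannot overwhelm it, so supervised training on the source sample provably lowers its free energy, which is exactly the mechanism behind the free-energy biases observed empirically.
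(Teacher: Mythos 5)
Your proposal is correct and follows essentially the same route as the paper: the paper also bounds $\mathcal{F}(\theta')-\mathcal{F}(\theta)$ by $\frac{\beta\eta^2}{2}\|\nabla\mathcal{L}_{nll}\|^2-\eta\langle\nabla\mathcal{F},\nabla\mathcal{L}_{nll}\rangle$ (deriving the descent lemma inline via an integral and Cauchy--Schwarz rather than quoting it) and then shows $\frac{\beta G^2\eta^2}{2}-\eta\varepsilon<0$ on the stated interval (via convexity of that quadratic, where you simply factor it). Your bookkeeping remark is also apt: the paper's proof likewise applies $\beta$-Lipschitzness to $\nabla\mathcal{F}$ despite the hypothesis being stated for $\mathcal{L}_{nll}$, so your explicit handling of that point is, if anything, more careful than the original.
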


The proof uses standard techniques in optimization~\cite{convex_optimization}. Theorem~\ref{gradient_theorem} reveals gradient correlation as a determining factor of the success of our algorithm. 

\section{Experiments}\label{sec:Experiment}
We evaluate EADA against prior arts on various scenarios including a toy example, three popular image classification datasets: VisDA-2017~\cite{visda2017}, Office-Home~\cite{Office-Home} and Office-31~\cite{Office31}, as well as a challenging semantic segmentation task, i.e., GTAV~\cite{GTA5} to Cityscapes~\cite{cityscapes}. All methods are implemented based on PyTorch, employing ResNet~\cite{resnet} models pre-trained on ImageNet~\cite{imagenet}. We follow the standard protocols as~\cite{AADA_WACV,Fu_2021_CVPR}. Meanwhile, the various compared active learning, active domain adaptation and domain adaptation algorithms are Source Only (ResNet), Random (randomly label some target data), BvSB ~\cite{BvSB_2009_CVPR}, Entropy~\cite{entropy_2014_IJCNN}, CoreSet~\cite{CoreSet_2019_ICLR}, WAAL~\cite{WAAL_2020_Shui}, BADGE~\cite{BADGE_2020_ICLR}, AADA~\cite{AADA_WACV}, DBAL~\cite{DBAL_2021_arxiv}, TQS~\cite{Fu_2021_CVPR}, CLUE~\cite{CULE_2021_ICCV}, AdaptSegNet~\cite{AdaSegNet}, and PLCA~\cite{PLCA_2020_NIPS}. Notably, we carry out all experiments with five trials and report the average accuracy. More details are presented in Appendix.

\begin{table*}[!htbp]
    \centering
    \caption{Comparison results on VisDA-2017 and Office-Home with 5\% target samples as the labeling budget.}
    \label{table:home_5_percent}
    \resizebox{\textwidth}{!}{
        \setlength{\tabcolsep}{1.0mm}{
        \begin{tabular}{l|c|ccccccccccccc}
            \hline
            \multirow{2}{*}{Method} & \multirow{2}{*}{VisDA-2017} & \multicolumn{13}{c}{Office-Home} \\
            & & Ar$\to$Cl & Ar$\to$Pr & Ar$\to$Rw & Cl$\to$Ar & Cl$\to$Pr & Cl$\to$Rw & Pr$\to$Ar & Pr$\to$Cl & Pr$\to$Rw & Rw$\to$Ar & Rw$\to$Cl & Rw$\to$Pr & Mean \\
            \hline
            Source Only & 44.7  $\pm$  0.1 & 42.1 & 66.3 & 73.3 & 50.7 & 59.0 & 62.6 & 51.9 & 37.9 & 71.2 & 65.2 & 42.6 & 76.6 & 58.3  \\
            Random & 78.1 $\pm$ 0.6 & 52.5 & 74.3 & 77.4 & 56.3 & 69.7 & 68.9 & 57.7 & 50.9 & 75.8 & 70.0 & 54.6 & 81.3 & 65.8  \\
            BvSB & 81.3 $\pm$ 0.4 & 56.3 & 78.6 & 79.3 & 58.1 & 74.0 & 70.9 & 59.5 & 52.6 & 77.2 & 71.2 & 56.4 & 84.5 & 68.2  \\
            Entropy & 82.7 $\pm$ 0.3 & 58.0 & 78.4 & 79.1 & 60.5 & 73.0 & 72.6 & 60.4 & 54.2 & 77.9 & 71.3 & 58.0 & 83.6 & 68.9  \\
            CoreSet & 81.9 $\pm$ 0.3 & 51.8 & 72.6 & 75.9 & 58.3 & 68.5 & 70.1 & 58.8 & 48.8 & 75.2 & 69.0 & 52.7 & 80.0 & 65.1 \\
            WAAL & 83.9 $\pm$ 0.4 & 55.7 & 77.1 & 79.3 & 61.1 & 74.7 & 72.6 & 60.1 & 52.1 & 78.1 & 70.1 & 56.6 & 82.5 & 68.3 \\
            BADGE & 84.3 $\pm$ 0.3 & 58.2 & 79.7 & 79.9 & 61.5 & 74.6 & 72.9 & 61.5 & 56.0 & 78.3 & 71.4 & 60.9 & 84.2 & 69.9  \\
            \hline
            AADA & 80.8 $\pm$ 0.4 & 56.6 & 78.1 & 79.0 & 58.5 & 73.7 & 71.0 & 60.1 & 53.1 & 77.0 & 70.6 & 57.0 & 84.5 & 68.3  \\
            DBAL & 82.6 $\pm$ 0.3 & 58.7 & 77.3 & 79.2 & 61.7 & 73.8 & 73.3 & 62.6 & 54.5 & 78.1 & 72.4 & 59.9 & 84.3 & 69.6  \\ 
            TQS & 83.1 $\pm$ 0.4 & 58.6 & 81.1 & 81.5 & 61.1 & 76.1 & 73.3 & 61.2 & 54.7 & 79.7 & 73.4 & 58.9 & 86.1 & 70.5  \\
            CLUE & 85.2 $\pm$ 0.4 & 58.0 & 79.3 & 80.9 & 68.8 & 77.5 & 76.7 & 66.3 & 57.9 & 81.4 & 75.6 & 60.8 & 86.3 & 72.5  \\
            \hline
            \bf EADA & \textbf{88.3 $\pm$ 0.1} & \textbf{63.6} & \textbf{84.4} & \textbf{83.5} & \textbf{70.7} & \textbf{83.7} & \textbf{80.5} & \textbf{73.0} & \textbf{63.5} & \textbf{85.2} & \textbf{78.4} & \textbf{65.4} & \textbf{88.6} & \textbf{76.7} \\
            \hline
        \end{tabular}
        }
        }
\end{table*}
\begin{figure}[!htbp]
  \centering  
    \subfigure[ResNet-18]{
      \label{Fig_VisDA2017_res18}
      \includegraphics[width=0.227\textwidth]{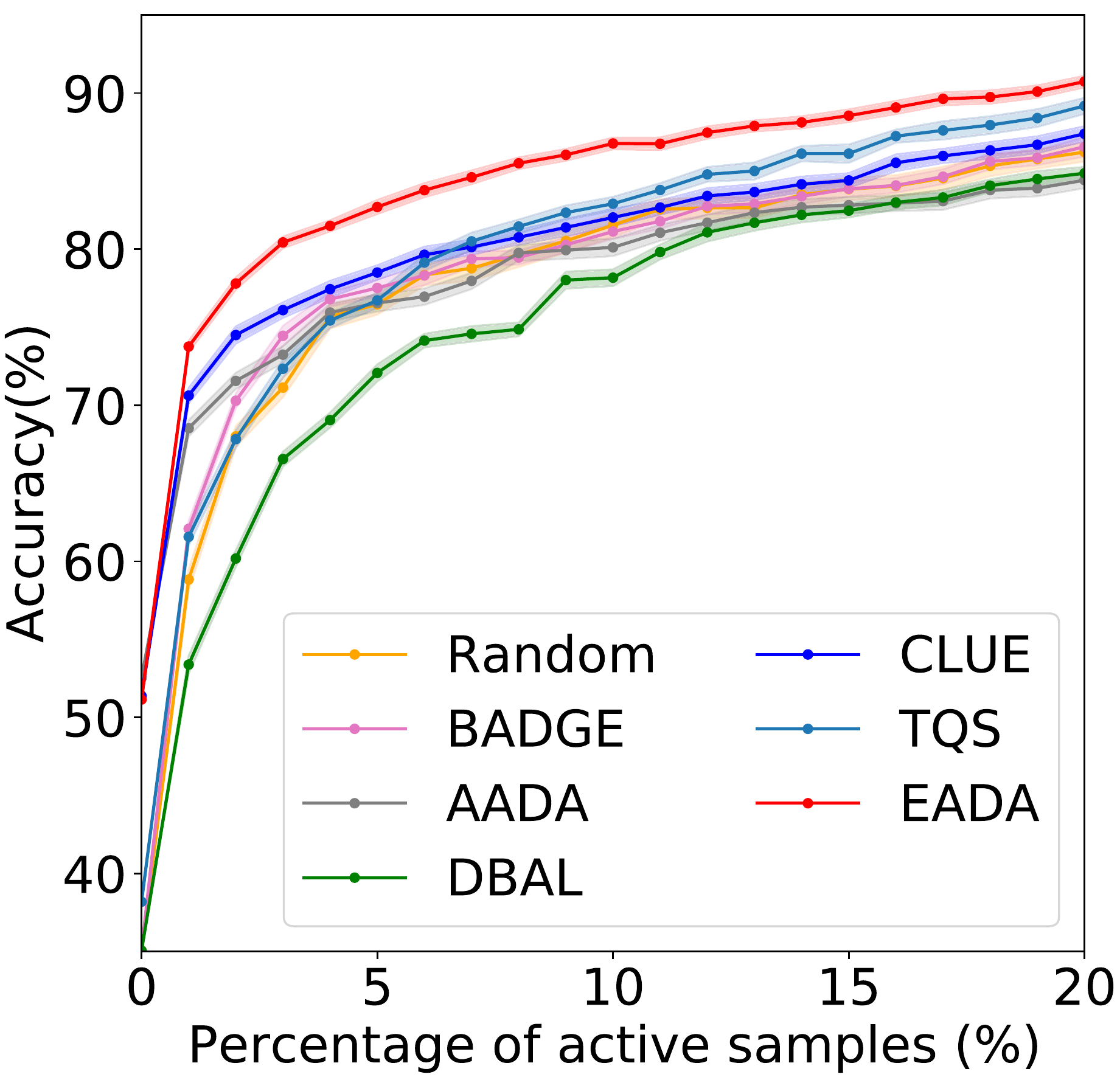}
    }\subfigure[ResNet-50]{
      \label{Fig_VisDA2017_res50}
      \includegraphics[width=0.227\textwidth]{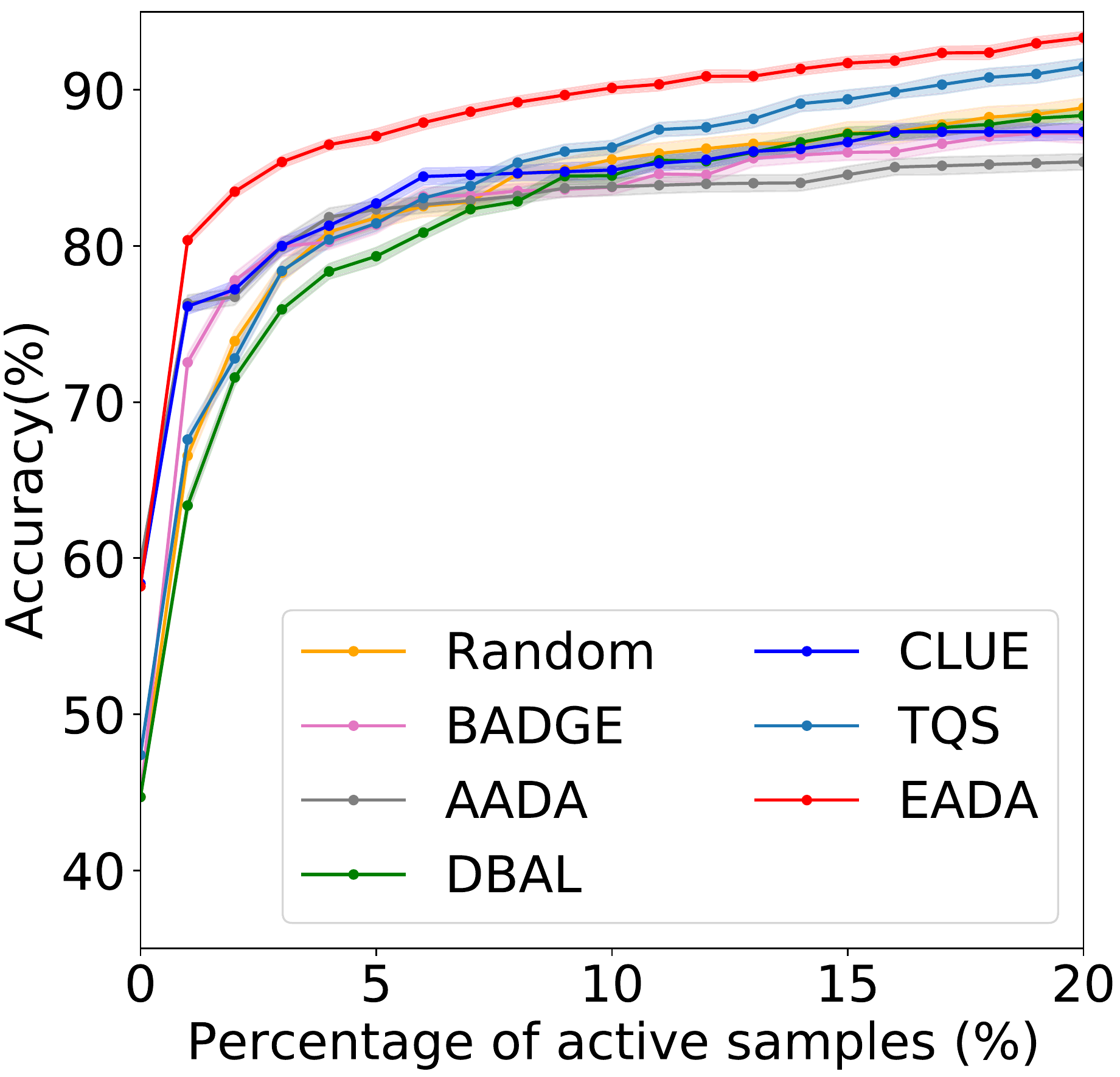}
    }
  \caption{Comparison results of varying the percentage of labeled target samples on VisDA-2017 with ResNet-18/50.}
  \label{Fig_VisDA2017}
\end{figure}

\subsection{Main Results}
\subsubsection{VisDA-2017.} The experimental results of different methods with $5\%$ labeling budget on VisDA-2017 are shown in the first column in Table \ref{table:home_5_percent}, proving that EADA is superior to all the baselines. Randomly selecting samples achieves better performance than ResNet, which implies that active learning is a promising and complementary solution for DA. 

In addition, to further validate the effectiveness of EADA, we vary the target labeling budget from $0\%$ to $20\%$ with different backbones ResNet-18/50 and report the performance after each round in Fig. \ref{Fig_VisDA2017}. We can observe that EADA consistently outperforms alternative methods across rounds. For instance, with shallower ResNet-18, we improve upon the state-of-the-art method, i.e., TQS by 2-6\% over rounds, and obtain comparable results against other methods using deeper ResNet-50 at some rounds. This demonstrates that EADA can indeed select more representative and informative target data using our novel energy-based criterion. Additional comparison results with standard active learning methods are shown in Appendix.

\begin{table}[t]
    \centering
    \caption{Comparison results on Office-31 with 5\% target samples as the labeling budget.}
    \label{table:office_5_percent}
    \resizebox{0.48\textwidth}{!}{
        \setlength{\tabcolsep}{0.5mm}{
        \begin{tabular}{l|ccccccc}
            \hline
            Method & A$\to$D & A$\to$W & D$\to$A & D$\to$W & W$\to$A & W$\to$D  & Mean \\
            \hline
            Source Only & 81.5 & 75.0 & 63.1 & 95.2 & 65.7 & 99.4 & 80.0 \\
            Random & 87.1 & 84.1 & 75.5 & 98.1 & 75.8 & 99.6 & 86.7 \\
            BvSB & 89.8 & 87.9 & 78.2 & 99.0 & 78.6 & \textbf{100.0} & 88.9 \\
            Entropy  & 91.0 & 89.2 & 76.1 & 99.7 & 77.7 & \textbf{100.0} & 88.9 \\
            CoreSet & 82.5 & 81.1 & 70.3 & 96.5 & 72.4 & 99.6 & 83.7 \\
            WAAL & 88.4 & 89.6 & 76.4 & \textbf{100.0} & 76.0 & \textbf{100.0} & 88.4  \\
            BADGE & 90.8 & 89.1 & 79.8 & 99.6 & 79.6 & \textbf{100.0} & 89.8 \\
            \hline
            AADA & 89.2 & 87.3 & 78.2 & 99.5 & 78.7 & \textbf{100.0} & 88.8 \\
            DBAL & 88.2 & 88.9 & 75.2 & 99.4 & 77.0 & \textbf{100.0} & 88.1 \\
            TQS & 92.8 & 92.2 & 80.6 & \textbf{100.0} & 80.4 & \textbf{100.0} & 91.1 \\
            CLUE & 92.0 & 87.3 & 79.0 & 99.2 & 79.6 & 99.8 & 89.5 \\
            \hline
            \bf EADA & \textbf{97.7} & \textbf{96.6} & \textbf{82.1} & \textbf{100.0} & \textbf{82.8} & \textbf{100.0} & \textbf{93.2} \\
            \hline
        \end{tabular}
        }
        }    
\end{table}

\subsubsection{Office-Home \& Office-31.} The results on Office-Home and Office-31 are reported in Table \ref{table:home_5_percent} \& \ref{table:office_5_percent}, respectively, showing the best performance across all tasks. Most Active DA methods generally outperform the traditional AL methods since the latter does not take the domain shift into account. EADA performs much better than all the baselines with a large margin, especially for hard tasks e.g., Ar$\to$Cl, Pr$\to$Cl, D$\to$A and W$\to$A, which emphasizes the benefit of jointly capturing domain characteristic and instance uncertainty for sampling along with free energy regularization.  

\begin{figure}[t]
    \subfigure[GTAV $\to$ Cityscapes]{
      \label{Fig_combine_da}
      \includegraphics[width=0.227\textwidth]{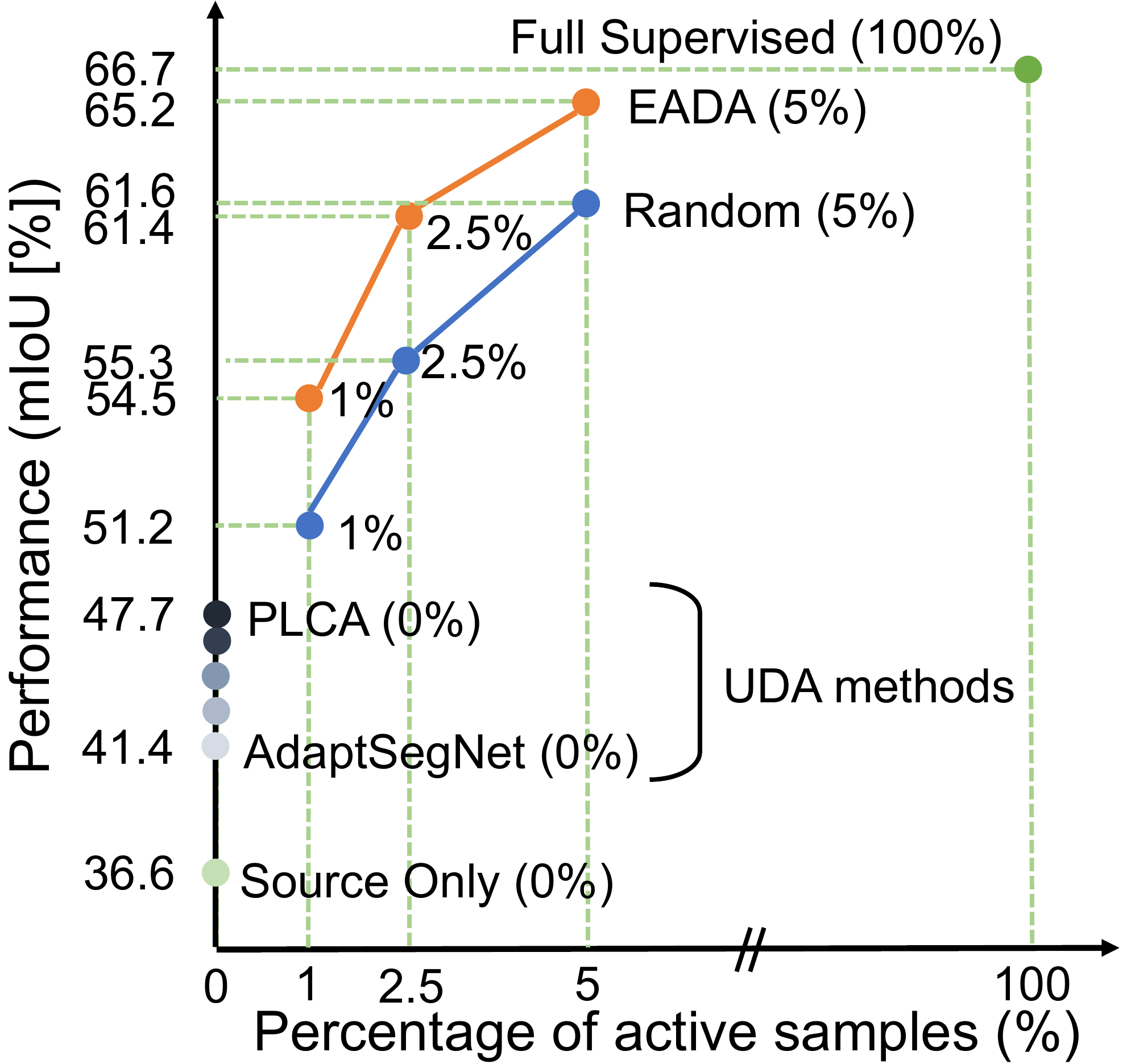}
    }\subfigure[Ablation study]{
      \label{Fig_ablation}
      \includegraphics[width=0.227\textwidth]{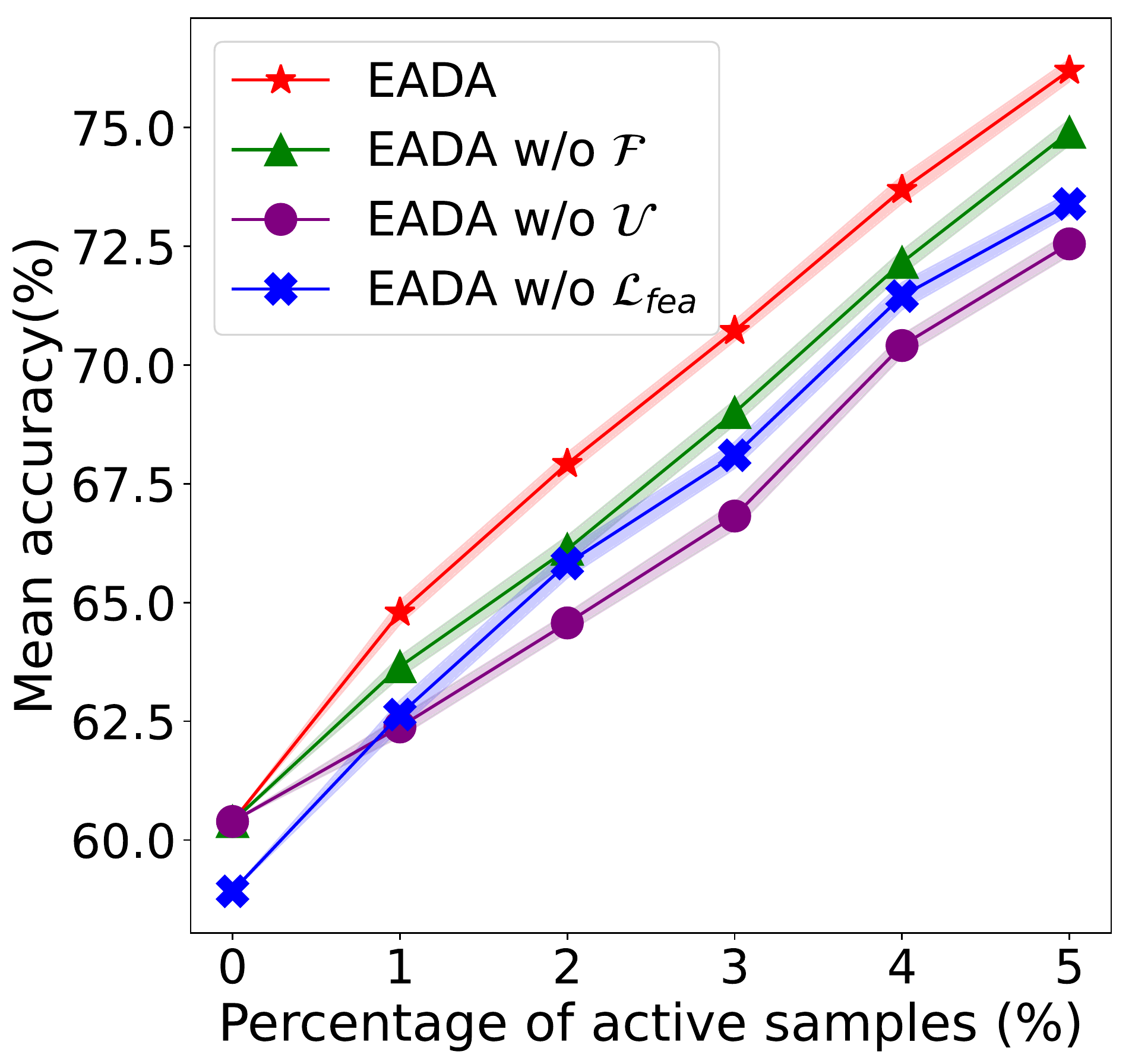}
    }
  \caption{(a) Experimental results on GTAV$\to$Cityscapes.(b) Mean accuracy of EADA and its variants on Office-Home.}
\end{figure}
\subsubsection{GTAV $\to$ Cityscapes.}
While prior works restrict their task to image classification, it is important to also study Active DA in the context of related tasks. 
Here we focus on semantic segmentation adapting from GTAV to Cityscapes and use the same setting as~\cite{AdaSegNet}, which adopts DeepLab-v2~\cite{chen2018deeplab} with ResNet-101 as backbone. We select 5\% target images to query for pixel-level labels of the whole image. The results are shown in Fig.~\ref{Fig_combine_da}. There is a large performance gap between UDA methods and the ``Full Supervised", such as AdaptSegNet, a popular adversarial approach, lags behind 25.2\% mIoU. Surprisingly, EADA brings a significant boost and shows performance comparable to that of fully supervised at the final round.

\subsection{Insight Analysis}
\begin{figure*}[t]
  \centering
  \includegraphics[width=0.98\textwidth]{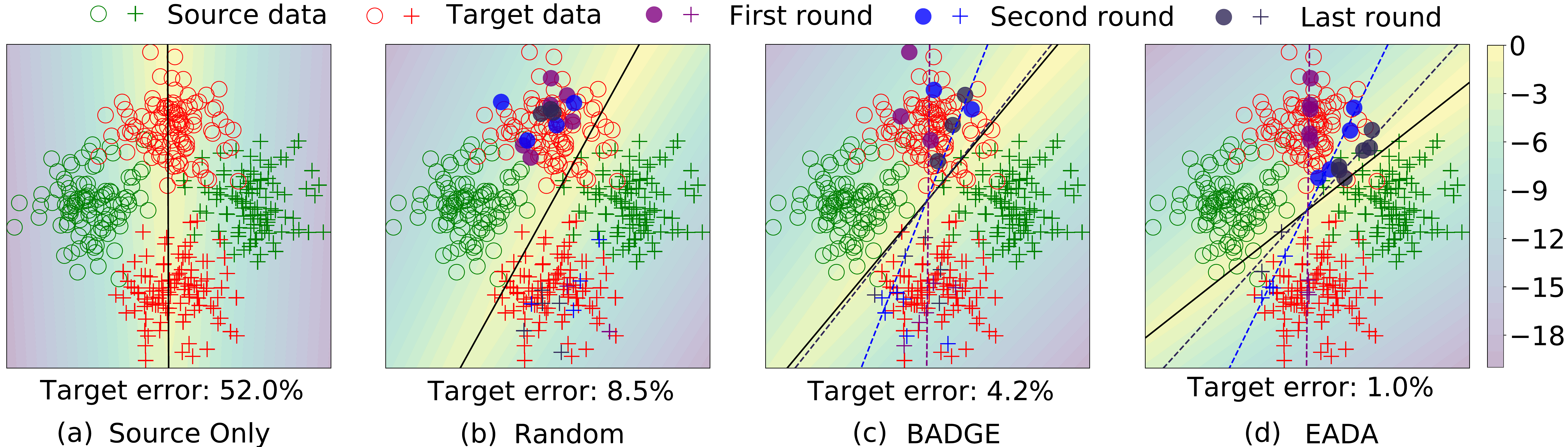}
  \caption{(Best viewed in color.) Illustrative comparison of sampling strategies on a toy example. Red points and green points denote unlabeled target data and labeled source data, respectively. Source data are drawn from two different Gaussian distributions denoted as circle class and plus class and target data are generated by rotating source data directly. We train a single layer fully-connected network and perform 3 rounds of active learning with a per-round budget is 2\% of target samples. In (c) \& (d), we draw the decision boundary before each selection round with a dash line, and the final decision boundary with a solid line.}
  \label{Fig_toy}
\end{figure*}

\subsubsection{Ablation study.} To investigate the efficacy of key components of the proposed EADA, we conduct a thorough ablation study with the following variants on all 12 tasks of Office-Home: (i) EADA w/o $\mathcal{F}$: removing the free energy sampling from selection process; (ii) EADA w/o $\mathcal{U}$: removing the instance uncertainty sampling from the selection process; (iii) EADA w/o $\mathcal{L}_{fea}$: removing $\mathcal{L}_{fea}$ from Eq.~\eqref{eq:overall}.

The results are shown in Fig.~\ref{Fig_ablation}, it is clear that the full method outperforms other variants and achieves large improvements. We also observe that EADA surpasses EADA (w/o $\mathcal{F}$ and w/o $\mathcal{U}$), manifesting that domain characteristic sampling and instance uncertain sampling are both necessary to select representative and informative data. Further, the consistent and notable increases from EADA w/o $\mathcal{L}_{fea}$ to EADA justify our decision to use a regularization term to align free energy distributions between both domains, which is beneficial to reducing the domain shift implicitly.

\subsubsection{Toy example.}
To better explain why the energy-based label acquisition strategy works well and what kind of sample is more representative and informative, we perform a toy example, a binary classification task with domain shift. As shown in Fig.~\ref{Fig_toy}, from the left to the right: Source Only, Random, BADGE, and our EADA are shown one by one and the target errors are 52.0\%, 8.5\%, 4.2\%, 1.0\%, respectively. 

We can make several insightful findings: (i) \textit{Free energy biases}: the values of free energy on target data are considerably higher than those on source data in Fig.~\ref{Fig_toy}(a). To this end, we design a free energy sampling as a surrogate measure to describe domain characteristic. (ii) \textit{Redundant/Trivial selection}: in Fig.~\ref{Fig_toy}(b), we can observe that a large portion of samples selected by Source Only resides in an area where the target data density is high, leading to many redundant instances. BADGE (a state-of-the-art AL method) runs a clustering scheme on ``gradient embedding'' to incorporate both uncertainty and diversity, which slightly mitigates the dilemma of redundancy. However, when we deeply study the relationship between decision boundary and the selected samples in each round, we find that BADGE still selects a few well-aligned samples and the selected samples are not the most uncertain samples of the current classifier. (iii) \textit{Free energy versus decision boundary}: the final decision boundary is the area with the highest free energy. Accordingly, we explore a MvSM metric to precisely quantify the uncertainty of a target sample under the current model. The results in Fig.~\ref{Fig_toy}(d) validate the effectiveness of our method. In short, we define that a target sample with the highest free energy and located around the decision boundary serves as the most valuable, both representative and informative, sample.

\begin{table}[t]
    \centering
    \caption{Effect of selection ratios.}
    \label{table:step_ratio}
    \resizebox{0.48\textwidth}{!}{
        \setlength{\tabcolsep}{0.5mm}{
        \begin{tabular}{l|cccc|cc}
            \hline
            $\alpha_1$ / $\alpha_2$ (\%) & 10 / 10 & 25 / 4 & 50 / 2 & 75 / 1.3 & 100 / 1 & 1 / 100 \\
            \hline
            Office-31 & 91.9 & 92.6 & \textbf{93.2} & 91.5 & 90.8 & 90.4 \\
            Office-Home & 76.2 & 76.3 & \textbf{76.7} & 76.0 & 74.7 & 72.6 \\
            VisDA-2017 & 87.2 & 87.6 & \textbf{88.3} & 87.1 & 86.6 & 85.7 \\
            \hline
        \end{tabular}
        }
        }
\end{table}

\subsubsection{Effect of selection ratios.}
In Table~\ref{table:step_ratio}, we show the accuracy on three image classification benchmarks with varying $\alpha_1$ ($\alpha_2$). Our EADA can achieve consistent performance within a wide range. It is worth noting that excluding any step ($\alpha_1$ or $\alpha_2=100$) will lead to a performance drop. We leave it as future work to explore other more complex combinations like self-adaptive $\alpha_1$ and weighted calculation.

\subsubsection{Time complexity.}
Table \ref{table:time_complexity} lists the query complexity and query time for EADA and comparable baseline methods. BADGE and CLUE achieve better mean accuracy (see Table~\ref{table:home_5_percent} and Table \ref{table:office_5_percent}) but are slower due to a clustering step. Our EADA obtains the best accuracy and is significantly more efficient than the competitive baselines as well.

\begin{table}[t]
    \centering
    \caption{Comparison results on query complexity and query time. $C,M,N$ denote number of classes, source instances and target instances respectively. $D$ denotes feature dimension, $B$ is labeling budget, $T$ denotes clustering rounds.}
    \label{table:time_complexity}
    \resizebox{0.46\textwidth}{!}{
        \setlength{\tabcolsep}{0.9mm}{
        \begin{tabular}{llcc}
            \hline
            & AL        & Query      & Query Time  \\
            & Strategy  & Complexity & (Ar$\to$Cl, VisDA-2017) \\
            \hline
            \multirow{4}{*}{\rotatebox{90}{\centering cluster}}
            & CoreSet & $\mathcal{O} (DN^2)$ & (0.1s, 1.3m) \\
            & BADGE & $\mathcal{O} (CDN^2)$ & (4.7s, 3.5m) \\
            & DBAL & $\mathcal{O} (DN(M+N))$ & (0.4s, 5.3m) \\
            & CLUE & $\mathcal{O} (DN(N+TB))$ & (0.5s, 2.9m) \\
            \hline
            \multirow{3}{*}{\rotatebox{90}{\centering rank}}
            & AADA & $\mathcal{O} (N\log N )$ & (0.03s, 2.2s) \\
            & TQS & $\mathcal{O} (N\log N )$ & (0.04s, 1.7s) \\
            & \bf EADA & $\mathcal{O} (N\log N )$ & \bf (0.02s, 0.9s) \\
            \hline
        \end{tabular}
    }
    }
\end{table}

\section{Conclusion}
In this paper, we present Energy-based Active Domain Adaptation (EADA), an algorithm to tackle performance limitations of domain adaptation at minimal label cost. We propose a novel energy-based sampling strategy into domain adaptation, for the selection of limited target samples that are representative and informative. On top of that, we further explore a regularization term to implicitly diminish the domain gap. In addition, theoretical results about when and why EADA is expected to work are elaborated. Through our experiments, we demonstrate its effectiveness in various transfer scenarios. More generally, our work is but a small step toward alleviating the intensive workload of annotation. This offers encouraging evidence that there remains value to be explored to go beyond the fully supervised method. 


\section{Acknowledgments}
This work was supported in part by the National Natural Science Foundation of China (No. 61902028) and the National Research and Development Program of China (No. 2019YQ1700).

\bibliography{aaai22}

\newpage
\appendix
\section{Appendix}
\paragraph{Contents}
\begin{itemize}
    \item Dataset Details
    \item Implementation Details
    \item Further Analysis for EADA
    \item Additional Results
    \item Detailed Theoretical Proofs
\end{itemize}

\subsection{Dataset Details}

\textbf{VisDA-2017}~\cite{visda2017} is a large-scale synthetic-2-real dataset for image classification competition. In total there are over 280k images from 12 categories. The images are split into three sets, i.e., a training set with 152,397 synthetic 2D renderings of 3D models, a validation set with 55,388 real images, and a test set with 72,372 real images. In this paper, we utilize the training and validation images as the source domain and the target domain, respectively.

\noindent\textbf{Office-Home}~\cite{Office-Home} is a challenging benchmark, consisting of 15,500 images in 65 object classes. There are 4 extremely distinct domains: Artistic images ({\bf Ar}), Clip Art ({\bf Cl}), Product images ({\bf Pr}), and Real-World images ({\bf Rw}). And we build twelve transfer tasks: Ar$\to$Cl, Ar$\to$Pr, $\dots$, Rw$\to$Cl, Rw$\to$Pr.

\noindent\textbf{Office-31}~\cite{Office31} is widely adopted by domain adaptation methods, comprising 31 categories of 4,110 images. It involves 3 different domains: \textbf{A}mazon (images downloaded from Amazon website), \textbf{D}SLR (images collected from digital SLR camera) and \textbf{W}ebcam (images recorded by web camera). We evaluate our method on 6 transfer tasks: A$\to$D, A$\to$W, $\dots$, W$\to$D.

\noindent\textbf{GTAV}~\cite{GTA5} contains 24,966 synthetic images with the resolution of 1914$\times$1052, which are rendered using the open-world video game ``Grand Theft Auto V". There are 19 semantic categories that are compatible with the semantic categories in the Cityscapes dataset.

\noindent\textbf{Cityscapes}~\cite{cityscapes} includes 5,000 urban scene images of resolution 2048$\times$1024. They are split into a training set with 2,975 images and a validation set with 500 images. Similar to~\cite{AdaSegNet,PLCA_2020_NIPS}, we evaluate our model on the validation set and report the mIoU of the common 19 classes.

\subsection{Implementation Details}
\subsubsection{Image classification.} We implement our experiments on the widely-used PyTorch~\cite{paszke2019pytorch} platform. For a fair comparison, our backbone network is identical to the competitive methods and is also pre-trained on ImageNet~\cite{imagenet}. For optimizer, we use the AdaDelta with a learning rate of 0.1 and train for 50 epochs. The batch size is 32. We adopt a unified set of hyper-parameters throughout the VisDA-2017, Office-Home, Office-31 datasets, where $\gamma$=0.01, $\alpha_1$=50, and $\alpha_2$=2. We use random-crop images for data augmentation during training and use center-crop images for testing. We follow the standard protocol~\cite{Fu_2021_CVPR,AADA_WACV,CULE_2021_ICCV} to use the whole target domain as testing data. We carry out experiments with five different random seeds and report the average classification accuracy.

\subsubsection{Semantic segmentation.} We employ the DeepLab-v2~\cite{chen2018deeplab} as the feature extractor which is composed of the backbone ResNet-101~\cite{resnet} pre-trained on ImageNet~\cite{imagenet} and the Atrous Spatial Pyramid Pooling (ASPP) module. 
Following~\cite{AdaSegNet}, sampling rates of ASPP module are fixed as \{6, 12, 18, 24\}. 
To train the segmentation network, we adopt the SGD optimizer where the momentum is 0.9 and the weight decay is 10$^{-4}$. 
The learning rate is initially set to 2.5$\times$10$^{-4}$ and is decreased following a `poly' learning rate policy with a power of 0.9. 
$\gamma$ is constantly set to 0.001 and $\alpha_1, \alpha_2$ are set to 50 and 2, respectively. 
The source input image is resized to 1280$\times$720 and the target input image is resized to 1024$\times$512. We respectively select 1\%, 2.5\%, and 5\% Cityscapes training samples as total labeling budget and query for pixel-level semantic annotation of the whole image.

\subsubsection{Labeling budget.} Following the previous active domain adaptation work~\cite{Fu_2021_CVPR}, the labeling budge in each selection round is \textbf{1\% of all target samples}. In Table~\ref{table:home_5_percent}, Table~\ref{table:office_5_percent} of the main paper, we perform 5 rounds (in total 5\% target samples) for VisDA-2017, Office-Home, Office-31 and report the classification accuracy of final model. Similarly, we perform 20 rounds (in total 20\% target samples) for VisDA-2017 and provide the accuracy after each round in Fig.~\ref{Fig_VisDA2017} of the main paper. Unless specified otherwise, we perform 5 round selections throughout the analysis below.

\begin{figure*}[t]
    \centering  
    \includegraphics[width=0.98\textwidth]{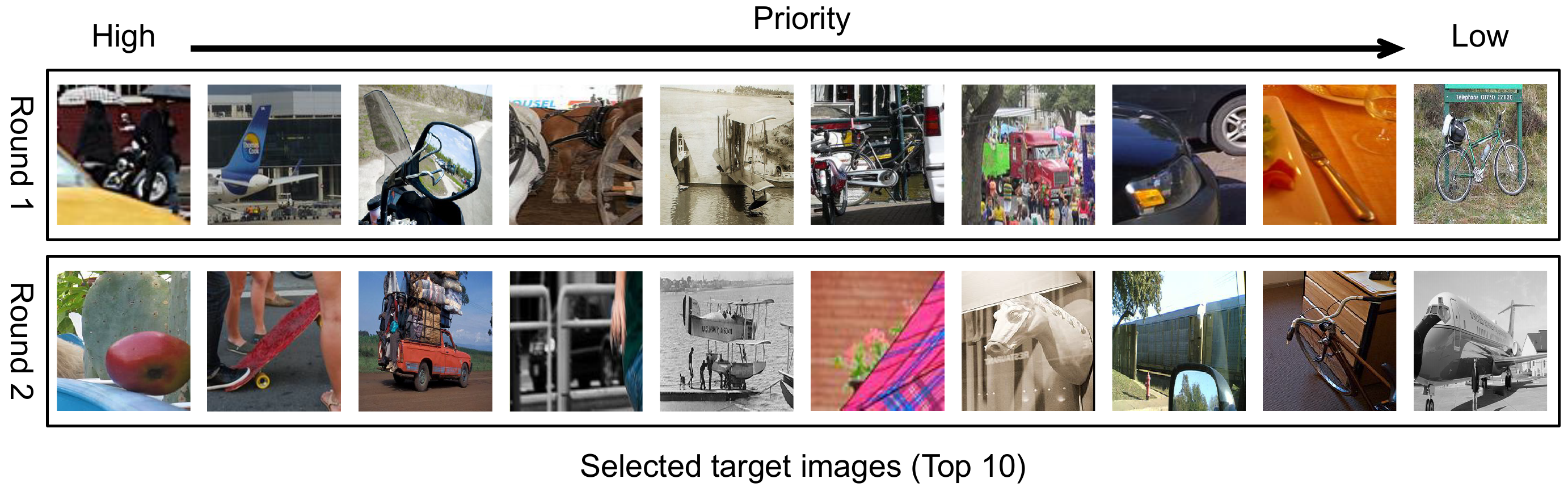}
    \caption{Visualization of instances selected by our method at Round 1 and Round 2 on VisDA-2017.}
    \label{fig:visulization}
\end{figure*}
\subsubsection{Baseline implementation.} Note that partial reported results are copied from~\cite{Fu_2021_CVPR} if the experimental setup is the same. We now elaborate on our implementation of other baseline algorithms:
\begin{itemize}
    \item Random: The naive baseline of randomly selecting 1\% target samples to query at each round.
    \item BvSB ~\cite{BvSB_2009_CVPR}: We compute the difference between the largest and second-largest predicted probability as an uncertainty metric and then select the bottom 1\% samples sorted according to the value of this metric at each round.
    \item Entropy~\cite{entropy_2014_IJCNN}: We utilize the entropy of the model outputs as a confidence measurement for each unlabeled target sample and then select the top 1\% samples sorted according to the value of this measurement at each round.
    \item CoreSet~\cite{CoreSet_2019_ICLR}: A representative sampling algorithm using core-set selection. We implement CoreSet using
    the released code: \url{https://github.com/ozansener/active_learning_coreset}.
    \item WAAL~\cite{WAAL_2020_Shui}: A hybrid sampling algorithm which models the interactive procedure in active learning as distribution matching by adopting Wasserstein distance. We implement WAAL using the released code: \url{https://github.com/cjshui/WAAL}
    \item BADGE~\cite{BADGE_2020_ICLR}: We compute ``gradient embeddings'' through taking the gradient of model loss w.r.t. classifier weights. Next, the $k$-MEANS++ scheme~\cite{kmeans_add} is run on these embeddings to yield a bach of samples. We implement BADGE using the released code: \url{https://github.com/JordanAsh/badge}
    \item AADA~\cite{AADA_WACV}: In AADA, a domain discriminator $G_d$ is learned to distinguish the features obtained from the feature extractor $G_f$ whether come from the source domain or the target domain. For active sampling strategy, all unlabeled target data are scored via the selection criterion $s(x)$ defined in the original paper: $s(x)=\frac{1-G_d^*(G_f(x))}{G_d^*(G_f(x))}\mathcal{H}(G_y(G_f(x)))$, where $G_y$ is the class predictor and $\mathcal{H}$ denotes the model entropy. Then, we select top 1\% samples for labeling at each round. 
    \item DBAL~\cite{DBAL_2021_arxiv}: A discrepancy-based active learning for domain adaptation. We implement DBAL using the released code: \url{https://github.com/antoinedemathelin/dbal}
    \item TQS~\cite{Fu_2021_CVPR} A state-of-the-art active domain adaptation method, which selects the most informative target samples by an ensemble of transferable committee, transferable uncertainty, and transferable domainness. We implement TQS using the released code \url{https://github.com/thuml/Transferable-Query-Selection}
    \item CLUE~\cite{CULE_2021_ICCV}: A more recent example that jointly captures uncertainty and diversity for active domain adaptation. We implement CLUE according to Algorithm 1 provided in~\cite{CULE_2021_ICCV}. Consider with the original work, we also optimize a semi-supervised adversarial entropy loss and perform cross-validation on source data to tune the hyperparameters.
\end{itemize}

\begin{figure}[t]
	\begin{minipage}[b]{0.5\linewidth} 
		\centering
        \includegraphics[width=0.9\linewidth]{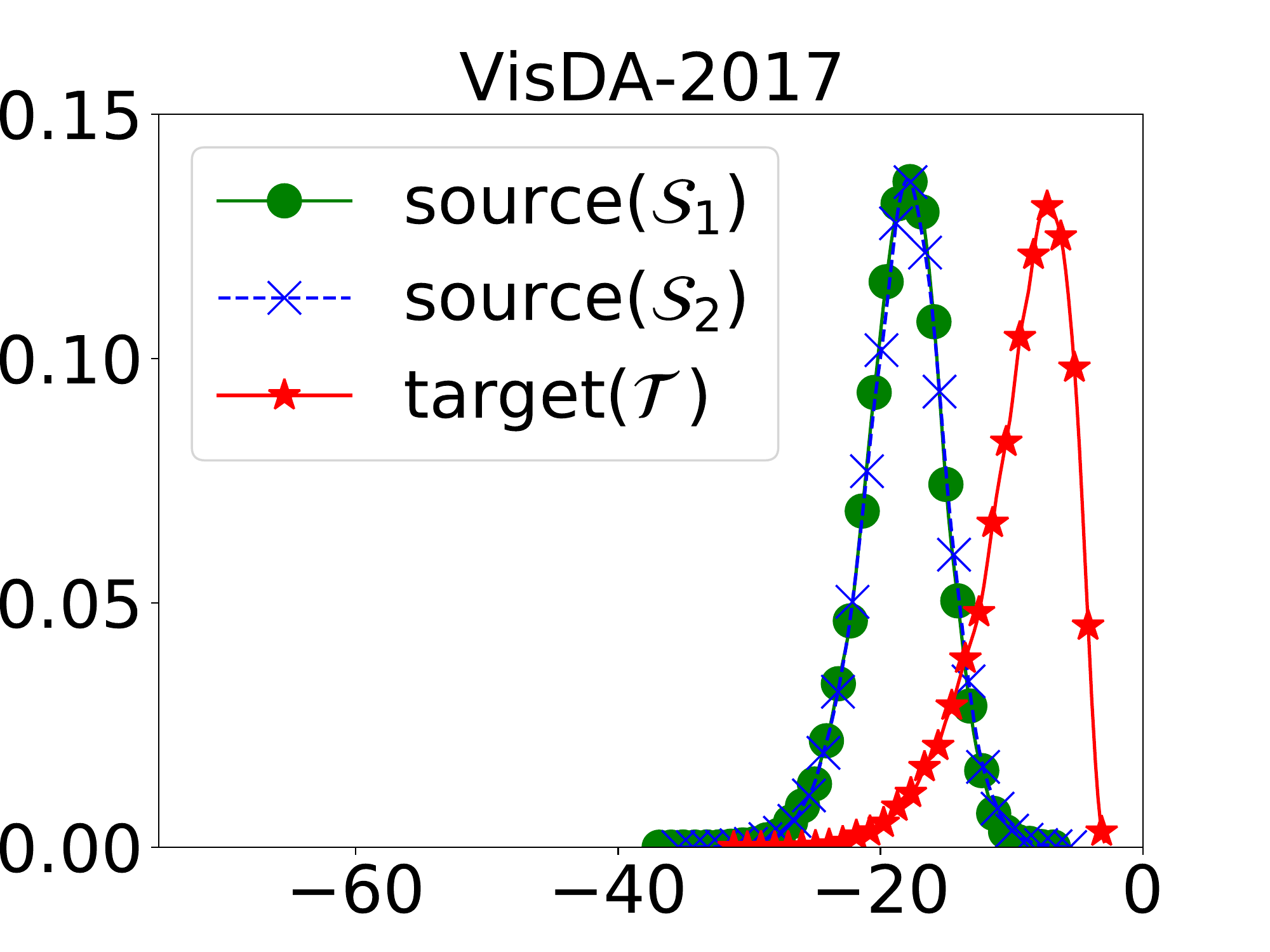}
		\caption{Free energy biases}\label{fig:bias}
	\end{minipage}%
	\begin{minipage}[b]{0.5\linewidth} 
		\centering
        \includegraphics[width=0.9\linewidth]{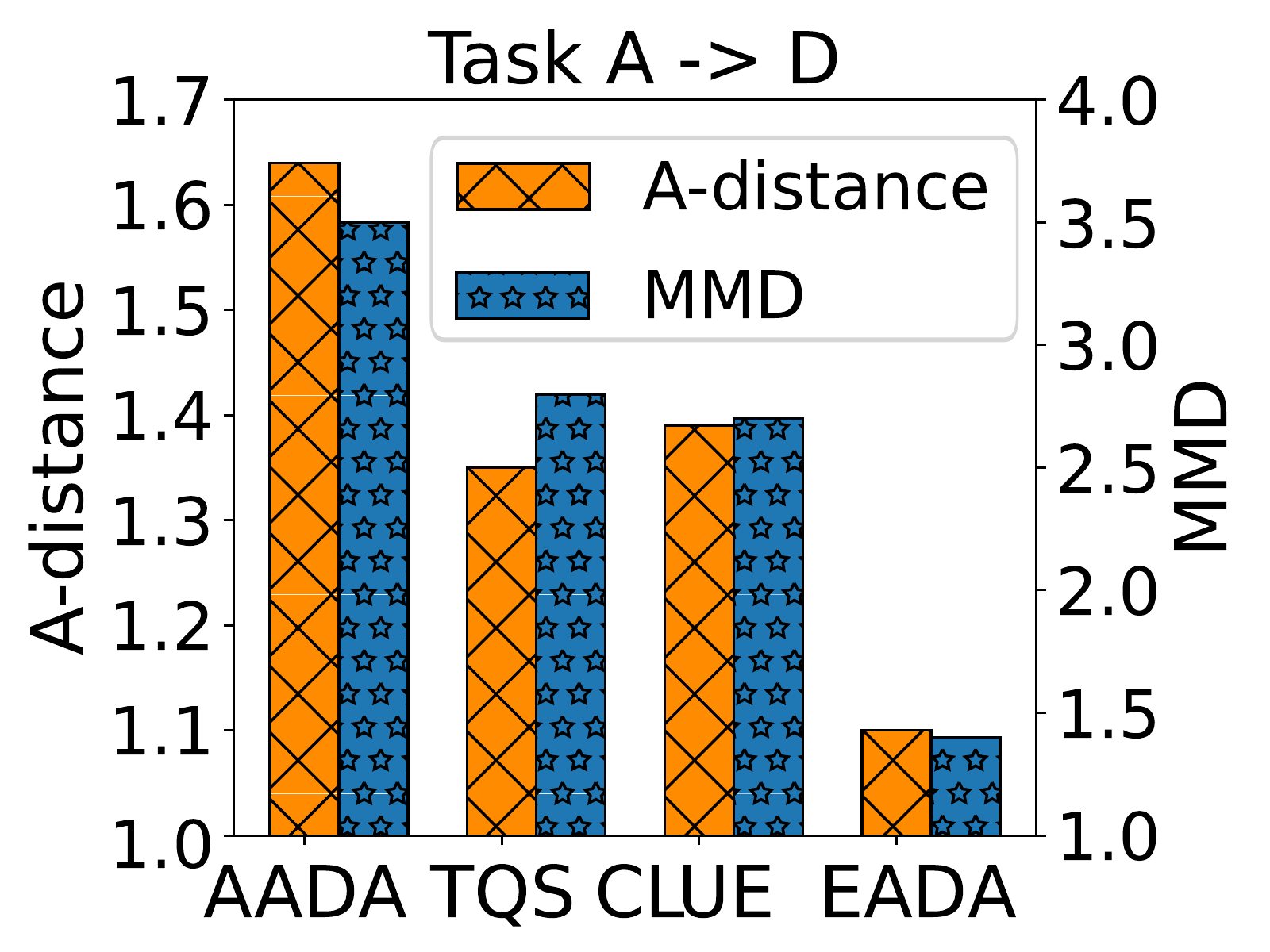}
		\caption{$\mathcal{A}$-distance}\label{fig:a_distance}
	\end{minipage}
\end{figure} 
\subsection{Further Analysis for EADA}
\subsubsection{Visualizing selected sampels.} To understand the behavior of EADA intuitively, we visualize the top 10 samples selected by EADA at Round 1 and Round 2 on VisDA-2017 in Fig.~\ref{fig:visulization}. As seen, EADA manages to sample instances from every class, and more representative and informative target samples are selected.

\subsubsection{More details about free energy biases.} As a matter of fact, free energy biases exhibit when two domains have distinct distributions, i.e., $p_s(x) \neq p_t(x)$. Statistically, Eq.~\eqref{eq:px_free_energy} indicates that $\mathcal{F}(x)$ could be substituted for $p(x)$ where $\mathcal{F}(x)$ is the free energy of $x$ from our model. $\mathcal{P}_{s}(\mathcal{F})$ and $\mathcal{P}_{t}(\mathcal{F})$ are free energy distributions of source and target, respectively. Thus, if $p_s(x) \neq p_t(x)$, $\mathcal{P}_{s}(\mathcal{F}) \neq \mathcal{P}_{t}(\mathcal{F})$. Empirically, we randomly divide source domain into two data sets ($\mathcal{S}_1\,, \mathcal{S}_2$), where $\mathcal{S}_1$ is used for supervised training while both $\mathcal{S}_2$ and $\mathcal{T}$ are used for testing. Then, we plot the free energy distributions of these three sets in Fig.~\ref{fig:bias}. The results show that $\mathcal{S}_1$ (green, circle) and $\mathcal{T}$ (red, star) still exhibit biases while $\mathcal{S}_1$ (green, circle) and $\mathcal{S}_2$ (blue, cross) do not.

\subsubsection{Effect of free energy alignment.} Our key observation is that the values of free energy on target samples are considerably higher than those on source ones, called \textit{free energy biases}. Mathematically, we show that these biases exhibit when two domains have different distributions. Naturally, one can treat this as a surrogate to reflect the divergence across domains. By designing a simple regularization term, we are allowed to minimize the biases, which to some extent aligns the distribution of source and target data. Experimentally, by calculating two common measurements, $\mathcal{A}$-distance and MMD, on task A $\to$ D (see Fig.~\ref{fig:a_distance}), we observe that EADA achieves a lower $\mathcal{A}$-distance/MMD, implying lower domain divergence.

\begin{figure}[t]
    \centering  
        \subfigure[Office-Home: Rw $\to$ Pr]{
        \includegraphics[width=0.227\textwidth]{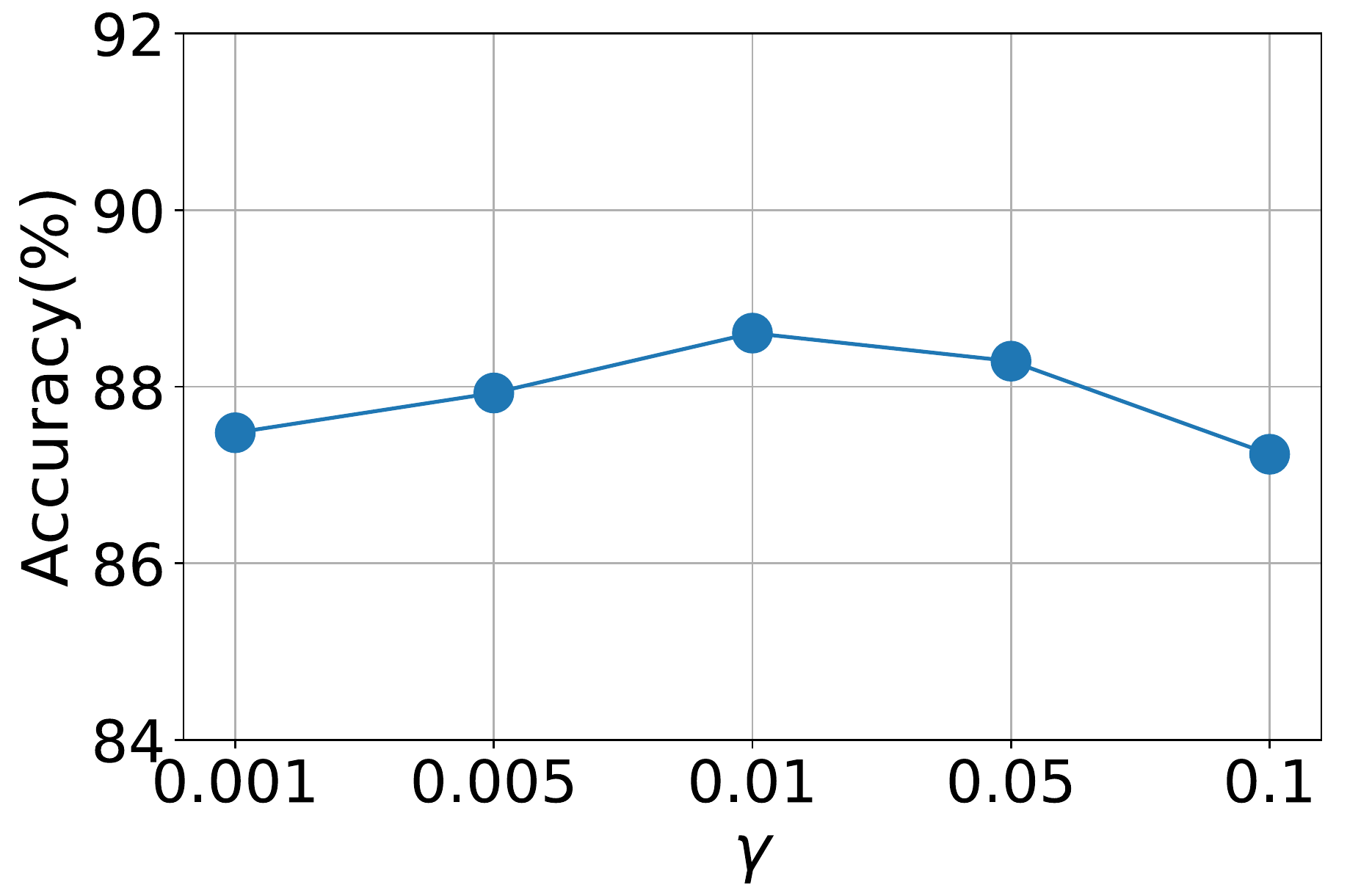}
        }\subfigure[Office-31: A $\to$ D]{
        \includegraphics[width=0.227\textwidth]{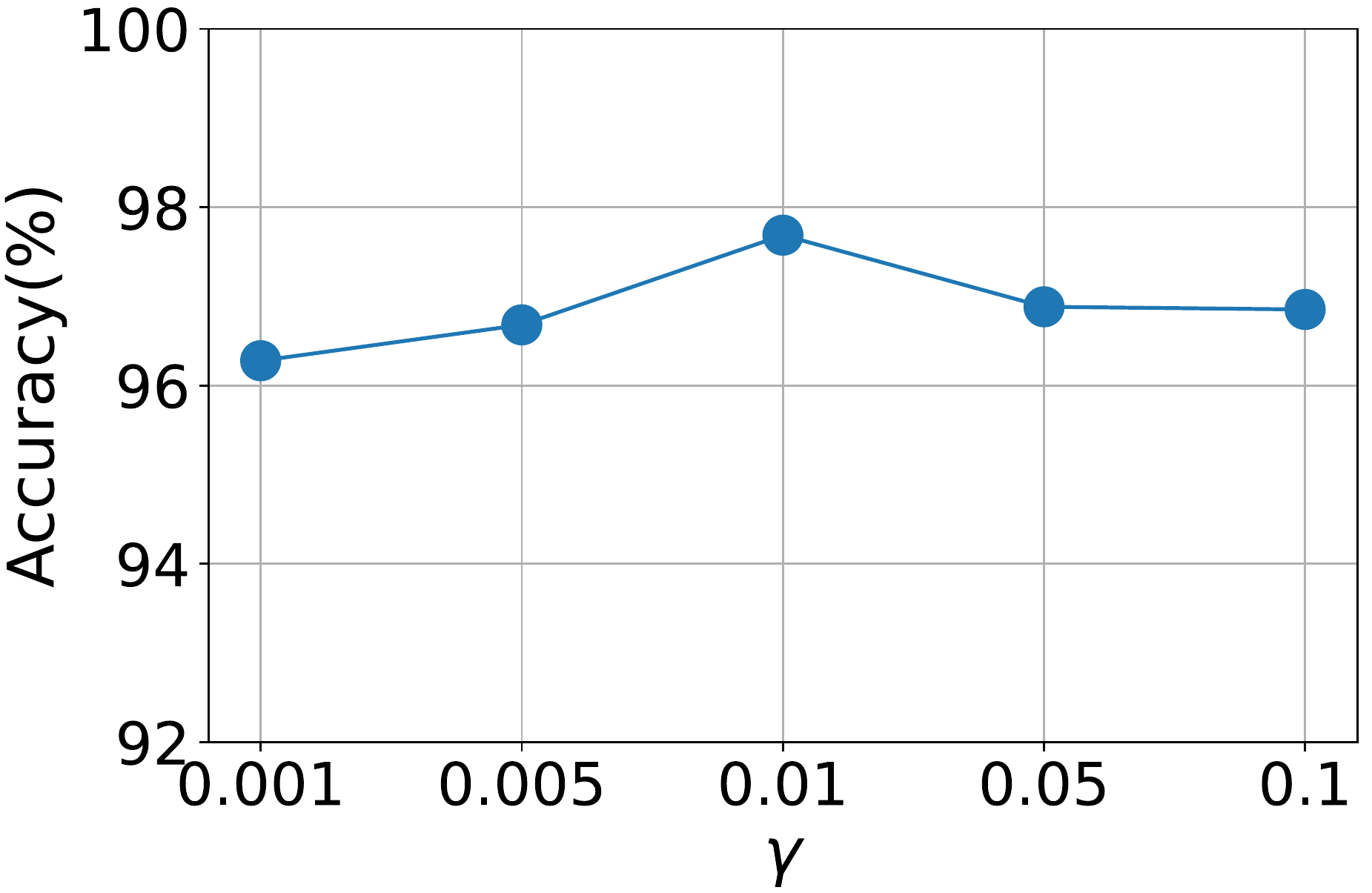}
        }
    \caption{Effect of regularization weight $\gamma$.}
    \label{fig:regularization_weight}
\end{figure}
\subsubsection{Effect of regularization weight $\gamma$.}
We perform parameter sensitivity analysis to evaluate the sensitivity of EADA on task Rw $\to$ Pr of Office-Home and A $\to$ D of Office-31. As shown in Fig.~\ref{fig:regularization_weight}, we select regularization weight from $\gamma\in\{0.001, 0.005, 0.01, 0.05, 0.1\}$. It can be seen that the performance of EADA increases first and then decreases slightly, which is a slow bell-shaped curve. Overall, these results indicate that the robustness of EADA under a wide range of parameter choices.

\begin{table}[t]
    \centering
    \caption{Error rate of the selected samples in each round on VisDA-2017 with \textbf{5\%} target samples as the labeling budget.}
        \label{table:error_rate}
        \begin{tabular}{l|ccccc}
            \hline
            Method / Round  & 1 & 2 & 3 & 4 & 5 \\
            \hline
            AADA & 83.6 & 79.2 & 75.5 & 75.3 &  77.3 \\
            CLUE & 36.8 & 26.5 & 26.2 & 24.0 &  22.2 \\
            \hline
            \bf EADA & 70.2 & 67.3 & 67.1 & 64.6 & 66.5 \\
            \hline
        \end{tabular}
\end{table}
\subsubsection{Error rate of the selected samples.} To explore the relationship between sampling strategy and predictions by the current model, we compare the error rate of the selected samples with two popular sampling strategies in Active DA, i.e., adopting the output of a domain discriminator (AADA) or using the distance to the cluster centroids (CLUE), in Table~\ref{table:error_rate}. Interestingly, the error rate of selected samples decreases as the number of rounds increases. We conjecture that the accuracy and generalization of the model will be significantly improved when a limited number of target samples are labeled and used to train the model. However, contrary to intuition, we find that selecting more incorrect target samples may not lead to performance gains in the target domain. These results suggest that a good active selection strategy should identify those samples that are representative and informative. We see it as future work.

\subsubsection{Analysis of the trade-offs on $\mathcal{L}_{nll}$.}
Given numerous labeled data from the source domain and a small quota of the target domain, we compute the negative log-likelihood loss over all available labeled data. For simplicity, in the main paper, we directly add the two supervised losses in Eq.~\eqref{eq:overall} without trade-offs. However, we argue that the trade-offs may be needed when we identify and annotate a few target samples~\footnote{See CLUE~\cite{CULE_2021_ICCV}, the loss weights for source and target supervised loss are set to 0.1 and 1.0, respectively. }. Here, we employ a scalar weight to re-define the overall objective:
\begin{small}
    \begin{equation}
        \begin{aligned}
        \min_{\theta}~& \delta_s \mathbb{E}_{(x,y)\sim\mathcal{S}}\mathcal{L}_{nll}(x,y;\theta) + \delta_t \mathbb{E}_{(x,y)\sim\mathcal{T}_{l}}\mathcal{L}_{nll}(x,y;\theta) \\ \nonumber
        &+ \gamma \mathbb{E}_{x\sim \mathcal{T}_u}\mathcal{L}_{fea}(x;\theta),
        \label{eq:overall_weighted}
    \end{aligned}
    \end{equation}
\end{small}
where $\delta_s$ and $\delta_t$ are scalar weights. We fix loss weight $\delta_t=1$ and study three ways for $\delta_s$: $\delta_s=1.0$ (default), $\delta_s=0.5$ and $\delta_s=0.1$. Looking at Table~\ref{table:source_supervised}, it is apparent that $\delta_s=0.1$ achieves the best accuracy, verifying the weight of source supervised loss should indeed be small when having access to a few labeled target samples.

\begin{table}[t]
    \centering
    \caption{Analysis of the trade-offs on $\mathcal{L}_{nll}$.}
        \label{table:source_supervised}
        \begin{tabular}{ccc}
            \hline
            $\delta_s=1.0$ (default) &  $\delta_s=0.5$ & $\delta_s=0.1$ \\
            88.0 & 88.3 & \bf 88.7 \\
            \hline
        \end{tabular}
\end{table}

\subsection{Additional Results}
\subsubsection{Varying labeling budget on VisDA-2017.} In Fig.~\ref{Fig_VisDA2017} of the main paper, we vary the target labeling budget from $0\%$ to $20\%$ with existing Active DA methods. For completeness, we present the performance results of other active learning methods in Fig.~\ref{fig:visda2017_active_methods}. EADA consistently beats all previous active learning methods. In addition, due to the existence of domain shift, traditional active learning methods perform poorly on the target domain even few target samples are labeled. We can see that EADA performs best even with a very limited labeling budget ($<$5\%), comparable to other active learning approaches that allow for larger budgets (about 12\%), suggesting that EADA requires very little budget, yet significantly improves adaptation performance.

\begin{figure}[t]
    \centering  
      \subfigure[ResNet-18]{
        \includegraphics[width=0.227\textwidth]{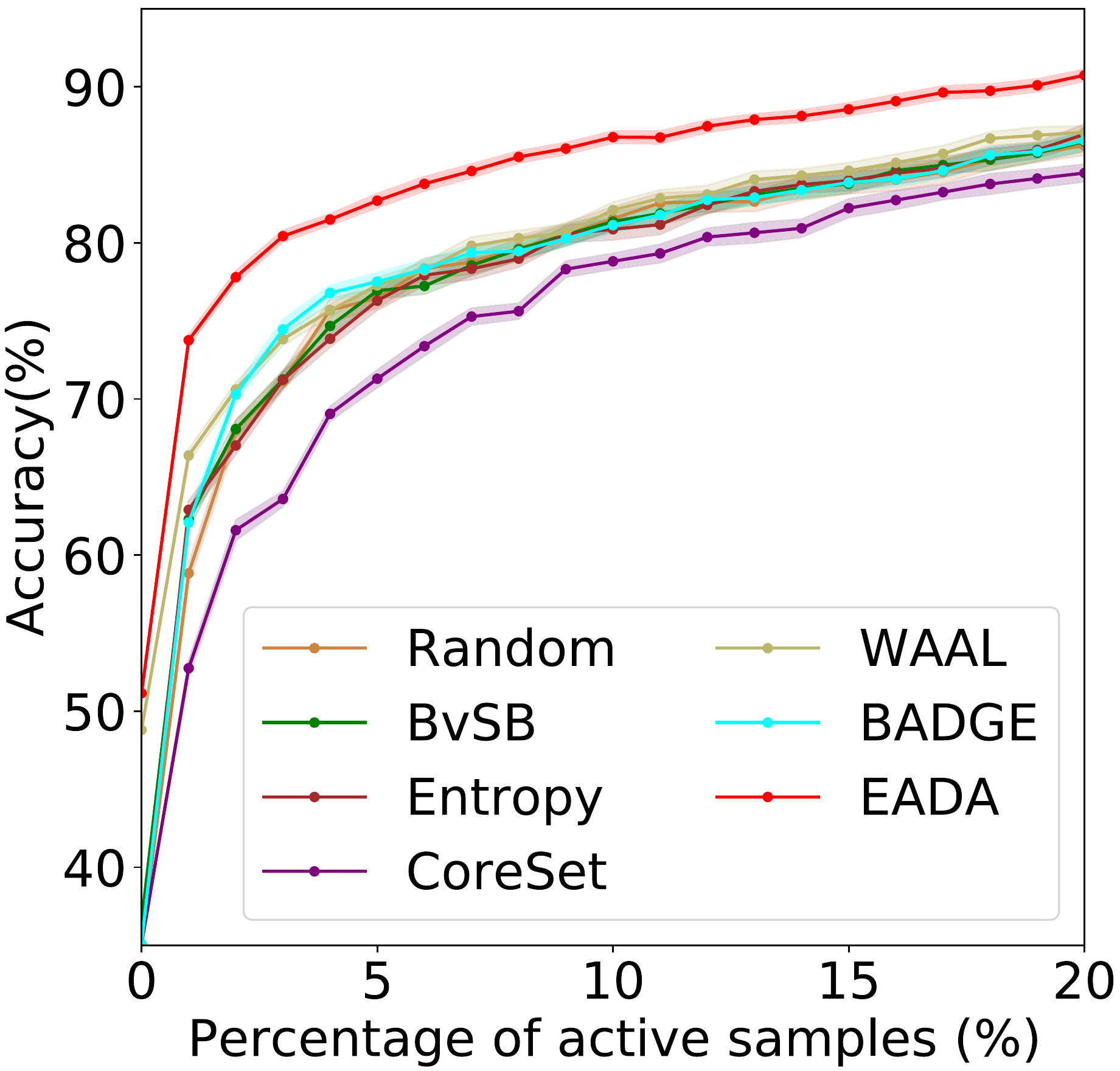}
      }\subfigure[ResNet-50]{
        \includegraphics[width=0.227\textwidth]{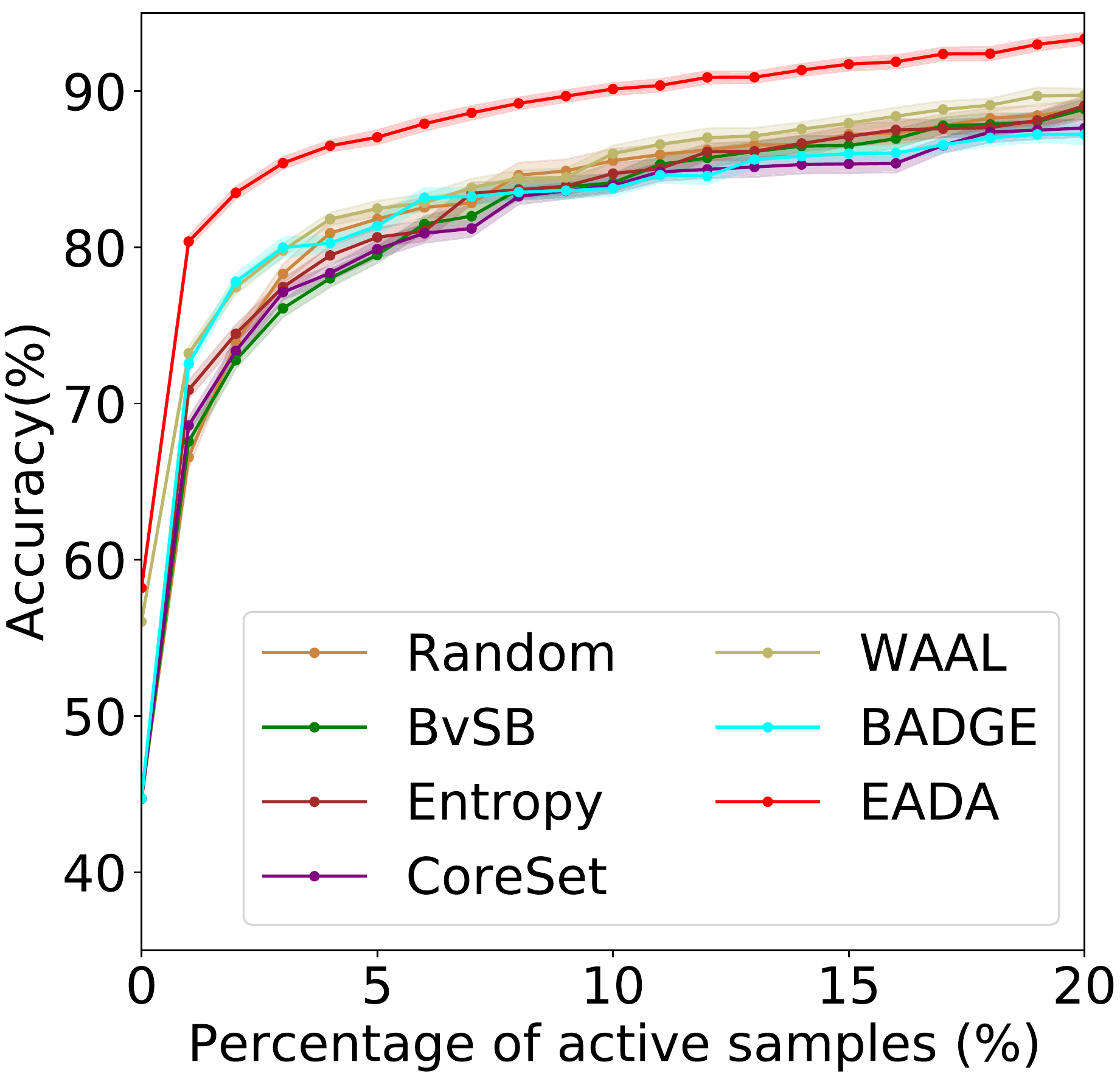}
      }
    \caption{Comparison results of varying the percentage of labeled target samples on \textbf{VisDA-2017} with ResNet-18/50.}
    \label{fig:visda2017_active_methods}
\end{figure}
\subsection{Detailed Theoretical Proofs}
\subsubsection{Gradient analysis about $\mathcal{L}_{nll}$.}
In the method part, we mention that the second term in $\mathcal{L}_{nll}$ (Eq.~\eqref{eq:nll_energy_mine_free_energy} in the main paper) will cause the energies of all answers to be pulled up. The energy of the correct answer is also pulled up, but not as hard as it is pushed down by the first term. In other words, the energy of the correct answer will be pulled down and other energies of incorrect answers will be pulled up when training with the $\mathcal{L}_{nll}$. Here, we analyze the gradient of $\mathcal{L}_{nll}$ to corroborate this statement. 

Firstly, we emphasize the equation of $\mathcal{L}_{nll}$ with $\tau$=1 again:
\begin{align}
        \mathcal{L}_{nll}(x,y;\theta) & = E(x,y;\theta) + \log\sum_{c\in\mathcal{Y}} \exp\left(-E(x,c;\theta)\right) \nonumber\\
        & = E(x, y; \theta) - \mathcal{F}(x;\theta)\,.
\end{align}%
And with the energy function $E(x,y;\theta)$, the conditional probability of label $y$ give the input $x$ can be estimated through the Gibbs distribution:
\begin{equation}
    p(y | x;\theta) = \frac{\exp(-E(x,y;\theta))}{\sum_{c\in\mathcal{Y}}\exp(-E(x,c;\theta))}\,.
\end{equation}%
In backward propagation, gradient of $\mathcal{L}_{nll}(x, y;\theta )$ is given by:
\begin{small}
    \begin{equation}
        \frac{\partial \mathcal{L}_{nll}(x, y;\theta )}{\partial \theta } = \frac{\partial E(x, y;\theta )}{\partial \theta } - \frac{\partial \mathcal{F}(x;\theta )}{\partial \theta} \,.
        \label{eq:gradient_analysis}
    \end{equation}
\end{small}
For the second term in Eq.~\eqref{eq:gradient_analysis}, we have 
\begin{small}
    \begin{equation}
        \begin{aligned}
        \label{eq:gradient_for_free_energy}
        \frac{\partial \mathcal{F}(x;\theta)}{\partial \theta } &= \frac{\partial \left(-\log\sum_{c\in\mathcal{Y}}\left(\exp\left(-E(x,c;\theta)\right)\right) \right)}{\partial \theta } \\
        & = \sum_{c \in \mathcal{Y}} p(c|x;\theta) \frac{\partial(E(x, c;\theta))}{\partial \theta }\,. \\
        \end{aligned}
    \end{equation}
\end{small}%
From above, we can get
\begin{small}
    \begin{align}
        \label{gradient_of_nll}
            \frac{\partial \mathcal{L}_{nll}(x, y;\theta)}{\partial \theta } &= \frac{\partial E(x, y;\theta)}{\partial \theta } - \sum_{c \in \mathcal{Y}} p(c|x;\theta) \frac{\partial(E(x, c;\theta))}{\partial \theta }  \nonumber\\
            & = (1 - p(y | x;\theta)) \frac{\partial E(x, y;\theta)}{\partial \theta }\\
            &~~~~ - \sum_{c \in \mathcal{Y} \backslash \{y\}} p(c | x;\theta) \frac{\partial(E(x, c;\theta))}{\partial \theta } \,. \nonumber
    \end{align}
\end{small}
From Eq.~\eqref{gradient_of_nll}, we can clearly see that the first term 
will pull down the energy of the correct answer and the second term will pull down the energies of all the incorrect answers.

\subsubsection{Toy example} 
Consider a $C$-class classification problem in a simple energy-based model (EBM) where $x \in \mathbb{R}^{D}$ denotes the input, $ y\in \{1,\cdots,C\}$ denotes the label. And the network is a single layer fully-connected network with parameter $\mathbf{W} \in \mathbb{R}^{C \times D}$. The output for $x$ is
\begin{small}
\[
    O\left( x\right)  = \mathbf{W} x ,
\]
\end{small}%
where 
\begin{small}
\[
    \mathbf{W} = \begin{pmatrix}
            \omega_{11} & \cdots & \omega_{1D} \\
            \vdots  & \ddots  & \vdots \\
            \omega_{C1} & \cdots & \omega_{CD} 
        \end{pmatrix}
    = \begin{pmatrix}
        \omega_1 & \cdots & \omega_C
    \end{pmatrix}^{\top}.
\]
\end{small}%
In this discrete prediction task, the negative log-likelihood loss is
\begin{small}
\begin{equation}\label{toy_loss}
    \Ln\left(x, y;\mathbf{W}\right)  = E\left(x, y;\mathbf{W}\right) -\F\left(x;\mathbf{W}\right) ,
\end{equation}
\end{small}%
where $E\left(x, y;\mathbf{W}\right)$ denotes the energy of correct answer and $\F\left(x;\mathbf{W}\right)$ denotes free energy of $x$. And they can be calculated as:
\begin{small}
\begin{align}
    E\left(x,j;\mathbf{W}\right)  &= O\left(x\right) [j] = \omega_{j}^{\top} x, \forall j\in\{1,\cdots,C\}, \label{toy_energy} \\ 
    \F\left( x;\mathbf{W}\right)  & = -\log\sum_{c = 1}^{C} \exp\left( -E\left( x,c;\mathbf{W}\right) \right)  \nonumber\\
    & = -\log\sum_{c = 1}^{C} \exp\left( -\omega_{c}^{\top} x\right) . \label{toy_free_energy}
\end{align}
\end{small}
Following, we calculate the gradient of the negative log-likelihood loss $\Ln(x,y;\mathbf{W})$ step by step.

First of all, for $E\left( x, j;\mathbf{W}\right)$ in Eq.~\eqref{toy_energy}, we have 
\begin{small}
\begin{equation}\label{gradinet_energy}
    \nabla E\left( x, j;\mathbf{W}\right)  = \begin{pmatrix}
        0 & \cdots & 0 \\
        \vdots  & \ddots  & \vdots \\
        x_1 & \cdots & x_D \\ 
        \vdots  & \ddots  & \vdots \\
        0 & \cdots & 0 
    \end{pmatrix}    
    = x{(e^j)}^{\top} ,
\end{equation}
\end{small}
where $e^j \in \mathbb{R}^D$, and $e_i^j = 1$ when $i = j$, 0 otherwise. Then, according to Eq.~\eqref{eq:gradient_for_free_energy}, we have
\begin{small}
\begin{equation}\label{gradient_free_energy}
    \begin{aligned}
        \nabla \F\left( x;\mathbf{W}\right)  & = \sum_{c = 1}^{C} p\left( c | x\right)  \nabla E\left( x, c ;\mathbf{W}\right)  \\
        & = x\begin{pmatrix}
        p\left( 1| x\right)  \\
        \vdots \\
        p\left( C| x\right) 
    \end{pmatrix}.
    \end{aligned}    
\end{equation}
\end{small}%
where
\begin{small}
\begin{equation}
    p\left(j| x\right)  = \frac{\exp\left( -E\left( x,j\right) \right) }{\exp\left( -\F\left( x\right) \right) }\,, \forall j\in\{1,\cdots,C\}. 
\end{equation}
\end{small}
Lastly, combining Eq.~\eqref{toy_loss}, Eq.~\eqref{gradinet_energy} and Eq.~\eqref{gradient_free_energy}, we have the gradient of $\Ln\left( x, y;\mathbf{W}\right)$ as follows:
\begin{small}
\begin{align}\label{gradient_nllloss}
    \nabla \Ln\left( x, y;\mathbf{W}\right) &= \nabla E\left(x, y;\mathbf{W}\right) - \nabla \F\left(x;\mathbf{W}\right) \nonumber \\
    &= x\begin{pmatrix}
        -p\left( 1| x\right)  \\
        \vdots \\
        1-p\left( y| x\right)  \\
        \vdots \\
        -p\left( C| x\right) 
    \end{pmatrix}.
\end{align}
\end{small}

\subsubsection{Proof of Lemma \ref{lemma_toy_gradient}}
\begin{proof}\label{proof_toy_gradient}
    A correct toy model prediction on the labeled source sample $\left( x, y\right) $ means that the inequality 
    \begin{small}
        \begin{equation}\label{lemma_gridnet_predict_right}
            p\left( y| x\right)  > p\left( j | x\right), \forall j \in \{1,\cdots,C\},j \neq y
        \end{equation}%
    \end{small}
    holds. And from Eq.~\eqref{gradinet_energy}, it is easy to see that 
    \begin{small}
        \begin{equation}
            \left\langle \nabla E\left( x, i;\mathbf{W}\right) , \nabla E\left( x, j;\mathbf{W}\right) \right\rangle  = \left\{
            \begin{aligned}
                & {\| x \|}^2 , & i = j, \\
                & 0, & \text{otherwise},
            \end{aligned}
            \right.
        \end{equation}
    \end{small}
    where $\left\langle \cdot, \cdot \right\rangle $ denotes the inner product. Then with Eq.~\eqref{gradinet_energy}, Eq.~\eqref{gradient_free_energy}, and Eq.~\eqref{gradient_nllloss}, we explicitly calculate the inner product between $\nabla \Ln\left( x, y;\mathbf{W}\right)$ and $\nabla E\left( x, y;\mathbf{W}\right)$
    \begin{small}
        \begin{equation}\label{lemma_gradient_deduction}
            \begin{aligned}
                & \left\langle \nabla \Ln\left( x, y;\mathbf{W}\right) , \nabla \F\left( x;\mathbf{W}\right) \right\rangle  \\
                & = \left\langle \nabla E\left( x, y;\mathbf{W}\right) -\nabla \F\left( x;\mathbf{W}\right) , \nabla \F\left( x;\mathbf{W}\right) \right\rangle  \\
                & = \left\langle \nabla E\left( x, y;\mathbf{W}\right) , \nabla \F\left( x;\mathbf{W}\right) \right\rangle  - \left\langle \nabla \F\left( x;\mathbf{W}\right) , \nabla \F\left( x;\mathbf{W}\right) \right\rangle  \\
                & = \left\langle \nabla E\left( x, y;\mathbf{W}\right) , \sum_{c = 1}^{C} p\left( c | x\right)  \nabla E\left( x, c;\mathbf{W}\right) \right\rangle  \\
                & ~~~~ - \left\langle \sum_{c = 1}^{C} p\left( c | x\right)  \nabla E\left( x, c;\mathbf{W}\right) , \sum_{c = 1}^{C} p\left( c | x\right)  \nabla E\left( x, c;\mathbf{W}\right) \right\rangle  \\
                & = p\left( y | x\right) {\| x \|} ^2  - \sum_{c = 1}^{C} p\left( c | x\right)  ^ 2 {\| x \|} ^2 \\
                & = p\left( y | x\right) \left( 1 - p\left( y | x\right) \right) {\| x \|} ^2 - \sum_{c = 1, c \neq y}^{C} p\left( c | x\right)  ^ 2 {\| x \|} ^2 \\
                & = p\left( y | x\right) \left( \sum_{c = 1, c \neq y}^{C} p\left( c | x\right) \right) {\| x \|} ^2 - \sum_{c = 1, c \neq y}^{C} p\left( c | x\right)  ^ 2 {\| x \|} ^2 \\
                & = \sum_{c = 1, c \neq y}^{C} p\left( c | x\right)  \left( p\left( y | x\right)  - p\left( c | x\right) \right)  {\| x \|} ^2.
            \end{aligned}
        \end{equation}
    \end{small}
    Next with the Eq.~\eqref{lemma_gridnet_predict_right} it is easy to know
    \begin{small}
        \begin{equation}
            \left\langle \nabla \Ln, \nabla \F\right\rangle  > 0.
        \end{equation}
    \end{small}
    It should be pointed out that the sample $x$ with norm $\left\lVert x \right\rVert = 0$ is out of consideration because it is meaningless in our toy example. Consequently, each term of the sum in the last term of Eq.~\eqref{lemma_gradient_deduction} is greater than zero, resulting in the sum of them is greater than zero.

\end{proof}

\subsubsection{Proof of Lemma \ref{lemma_toy_step}}
\begin{proof}\label{proof_toy_step}
    Before embarking on the proof we make some preparation include some conventions of some symbols. In order to show our proof more conveniently, all the symbols with a prime superscript denotes the changed values of the same one without superscript after one step of gradient descent, e.g. $E\left( x,y\right)  = E\left( x,y;\mathbf{W}\right) $ and $E^{\prime}\left( x,y\right)  = E\left( x,y;\mathbf{W^{\prime}}\right) $.

    Firstly, a correct toy model prediction on the labeled source sample $\left( x, y\right) $ indicates that $\forall j \in \{1,\cdots,C\} \backslash \{y\}$
    \begin{small}
        \begin{equation}\label{lemma_step_predict}
            E\left( x, y\right)  < E\left( x, j\right) .
        \end{equation}
    \end{small}
    Then we have a serial of equivalent propositions
    \begin{small}
        \begin{align}\label{equivalent_first}
                \F\left( x, \mathbf{W}\right)  & > \F\left( x, \mathbf{W^{\prime}}\right)  \nonumber\\
                \Leftrightarrow -\log\sum_{c = 1}^{C}\exp\left( -E\left( x, c\right) \right)  & > -\log\sum_{c = 1}^{C}\exp\left( -E^{\prime}\left( x, c\right) \right)  \nonumber\\
                \Leftrightarrow \sum_{c = 1}^{C}\exp\left( -E\left( x, c\right) \right)  & < \sum_{c = 1}^{C}\exp\left( -E^{\prime}\left( x, c\right) \right) . 
            \end{align}    
    \end{small}
    From Eq.~\eqref{toy_energy} and Eq.~\eqref{gradient_nllloss} and one step of gradient descent, we can explicitly express $E^{\prime}$ as follows
    \begin{small}
        \begin{align}
            E^{\prime} \left( x, y\right)  & = \omega_y^{\top} x - \eta \left( 1 - p\left( y| x\right) \right) \left\lVert x \right\rVert ^2 \label{E_prime}\\
            & = E\left( x,y\right)  - \eta\left( 1 - p\left( y| x\right) \right) \left\lVert x \right\rVert ^2,  \nonumber\\
            E^{\prime} \left( x, j\right)  & = \omega_j^{\top} x  + \eta p\left(j| x\right) \left\lVert x \right\rVert ^2 \label{E_original}\\
            & = E\left(x,j\right)  + \eta p\left(j| x\right) \left\lVert x \right\rVert ^2, j\in \{1,\cdots,C\} \backslash \{y\}.  \nonumber
        \end{align}
    \end{small}
    With the Eq.~\eqref{E_prime} and Eq.~\eqref{E_original}, we can continue our deduction of equivalent propositions of Eq.~\eqref{equivalent_first} as follows
    \begin{small}
        \begin{equation}\label{equivalent_second}
            \begin{aligned}
                & \sum_{c = 1}^{C}\exp\left( -E\left( x, c\right) \right)   < \sum_{c = 1}^{C}\exp\left( -E^{\prime}\left( x, c\right) \right)  \\
                & \Leftrightarrow \sum_{c = 1}^{C}\exp\left( -E\left( x, c\right) \right)   < \\
                &\sum_{c = 1, c \neq y}^{C}\exp\left( -E\left( x, c\right)  - \eta p\left( c| x\right) \left\lVert x \right\rVert ^2 \right)  \\
                & + \exp\left( -E\left( x, y\right)  + \eta \left( 1 - p\left( y| x\right) \right)  \left\lVert x \right\rVert ^2 \right)  \\
                & \Leftrightarrow \sum_{c = 1}^{C}\exp\left( E\left( x,y\right) -E\left( x, c\right) \right)  < \\
                &\sum_{c = 1, c \neq y}^{C}\exp\left( E\left( x,y\right) -E\left( x, c\right)  - \eta p\left( c| x\right) \left\lVert x \right\rVert ^2 \right)  \\
                & + \exp\left( \eta\left( 1 - p\left( y| x\right) \right)  \left\lVert x \right\rVert ^2 \right)  \\
                & \Leftrightarrow \sum_{c = 1, c \neq y}^{C}\exp\left( E\left( x,y\right) -E\left( x, c\right) \right)  + e^0 < \\
                &\sum_{c = 1, c \neq y}^{C}\exp\left( E\left( x,y\right) -E\left( x, c\right)  - \eta p\left( c| x\right) \left\lVert x \right\rVert ^2 \right)  \\
                & + \exp\left( \eta\sum_{c=1, c\neq y}^{C}p\left(c| x\right) \left\lVert x \right\rVert ^2 \right) . \\
            \end{aligned}    
        \end{equation}
    \end{small}
    Equipped with Lemma \ref{lemma_exhange_exp}, with $\eta > 0$ and Eq.~\eqref{lemma_step_predict}, we can clearly see that in the final inequality of Eq.~\eqref{equivalent_second}, 
    \begin{small}
    \[
        0 > E\left( x, y\right)  - E\left( x, c\right)  ,  
    \]
    \[
        \eta p\left( c| x\right)  \left\lVert x \right\rVert ^ 2 > 0,
    \]
    \[
        c = 1, \cdots,C, c \neq y, 
    \]
    \end{small}
    which makes it be an example of Lemma \ref{lemma_exhange_exp}, so the original inequality holds for our assumption.
\end{proof}

\begin{lemma}\label{lemma_exhange_exp}
    Let there be $2n + 1$ real numbers $a, a_1, \cdots,a_n$ and $b_1, \cdots,b_n$, if
    \begin{small}
    \[
        a > a_i, b_i > 0, i=1, \cdots,n,
    \]
    we have 
    \[
        e^{a + \sum_{i = 1}^{n} b_i} + \sum_{i = 1}^{n}e^{a_i-b_i} > e^{a} + \sum_{i = 1}^{n}e^{a_i},
    \]
    \end{small}%
\end{lemma}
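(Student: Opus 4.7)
The plan is to rearrange the inequality into a form where each side decomposes as a sum with a common positive factor $(e^{b_i}-1)$, and then use the two hypotheses $a>a_i$ and $b_i>0$ termwise. Specifically, I would move the simple exponentials to the right-hand side and write
\begin{equation*}
  e^{a+\sum_{i=1}^n b_i} - e^{a} \;>\; \sum_{i=1}^{n}\bigl(e^{a_i}-e^{a_i-b_i}\bigr),
\end{equation*}
then factor the right-hand side as $\sum_{i=1}^{n} e^{a_i-b_i}(e^{b_i}-1)$.

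Next I would handle the left-hand side by a telescoping identity:
\begin{equation*}
  e^{a+\sum_{i=1}^n b_i} - e^{a}
  \;=\; \sum_{i=1}^{n} e^{a+\sum_{j<i} b_j}\bigl(e^{b_i}-1\bigr)
  \;\ge\; \sum_{i=1}^{n} e^{a}\bigl(e^{b_i}-1\bigr),
\end{equation*}
where the inequality uses $b_j>0$ (so every prefix sum $\sum_{j<i} b_j\ge 0$ and each factor $e^{\sum_{j<i}b_j}\ge 1$), together with the positivity of $e^{b_i}-1$ since $b_i>0$.

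Finally, I would perform a termwise comparison of $\sum_i e^a(e^{b_i}-1)$ with $\sum_i e^{a_i-b_i}(e^{b_i}-1)$. The hypotheses $a>a_i$ and $b_i>0$ give $a>a_i-b_i$, hence $e^a>e^{a_i-b_i}$; multiplying by the strictly positive quantity $e^{b_i}-1$ yields strict inequality in each summand, and since $n\ge 1$ the sums satisfy
\begin{equation*}
  \sum_{i=1}^{n} e^{a}\bigl(e^{b_i}-1\bigr) \;>\; \sum_{i=1}^{n} e^{a_i-b_i}\bigl(e^{b_i}-1\bigr).
\end{equation*}
Chaining this strict termwise bound with the earlier telescoping estimate closes the argument.

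There is essentially no hard step here; the only thing to be careful about is ensuring strictness propagates correctly (it does, because at least one summand is present and strictly positive), and that the telescoping direction is chosen so that the residual exponents $\sum_{j<i}b_j$ are nonnegative rather than negative. An alternative, equally clean route is induction on $n$: the base case $n=1$ reduces to $e^a(e^{b_1}-1)>e^{a_1-b_1}(e^{b_1}-1)$, and the inductive step absorbs $b_n$ into the $a$-exponent by noting that $a+\sum_{j<n} b_j$ still exceeds all of $a_1,\dots,a_{n-1}$, so the inductive hypothesis applies.
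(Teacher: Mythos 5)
Your argument is correct and is essentially the paper's own proof in a slightly repackaged form: the paper chains $n$ ``exchange'' inequalities $e^{a+\sum_{i=j}^{n}b_i}(1-e^{-b_j})>e^{a_j}(1-e^{-b_j})$, which is exactly your telescoping decomposition of $e^{a+\sum_i b_i}-e^a$ into increments with the positive factor $e^{b_i}-1$, followed by the same termwise comparison using $a>a_i$ and $b_i>0$. Your version is arguably a cleaner write-up of the identical idea, and you correctly flag the only delicate points (strictness and $n\ge 1$).
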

\begin{proof}\label{proof_exchange_exp}
    Because 
    \begin{small}
    \[
        a > a_i, b_i > 0, i=1, \cdots,n,
    \]
    so
    \[
        e^{a + \sum_{i = j}^n b_i} > e^{a_j}, 1 - e^{-b_j} > 0, j = 1, \cdots, n.
    \]
    \end{small}%
    Then we obtain 
    \begin{small}
    \begin{equation}\label{prepair_exchange_exp}
        e^{a + \sum_{i = j}^n b_i} \left( 1 - e^{-b_j}\right)  > e^{a_j} \left( 1 - e^{-b_j}\right) , j = 1,\cdots,n.
    \end{equation}
\end{small}
    With some simple algebra, from Eq.~\eqref{prepair_exchange_exp} we can find that
    \begin{small}
    \begin{equation}\label{all_exchange}
        e^{a + \sum_{i = j}^n b_i} + e^{a_j - b_j} > e^{a + \sum_{i = j + 1}^n b_i} + e^{a_j} ,j = 1, \cdots,n.
    \end{equation}
    \end{small}
    Equipped with Eq.~\eqref{all_exchange}, we can start our deduction as follows:
    \begin{small}
    \begin{equation}
        \begin{aligned}
            &e^{a + \sum_{i = 1}^{n} b_i} + \sum_{i = 1}^{n}e^{a_i-b_i}  \\
            &= e^{a + \sum_{i = 1}^{n} b_i} + e^{a_1-b_1} + \sum_{i = 2}^{n}e^{a_i-b_i} \\
            &> e^{a + \sum_{i = 2}^{n} b_i} + \sum_{i = 1}^{1} e^{a_i} + \sum_{i = 2}^{n}e^{a_i-b_i} \\
            &= e^{a + \sum_{i = 2}^{n} b_i} + e^{a_2-b_2} + \sum_{i = 1}^{1} e^{a_i} + \sum_{i = 3}^{n}e^{a_i-b_i} \\\
            &>e^{a + \sum_{i = 3}^{n} b_i} + \sum_{i = 1}^{2} e^{a_i} + \sum_{i = 3}^{n}e^{a_i-b_i} \\
			& = e^{a + \sum_{i = 3}^{n} b_i} + e^{a_3-b_3} + \sum_{i = 1}^2 e^{a_i} + \sum_{i = 4}^{n}e^{a_i-b_i} \\
            & > e^{a + \sum_{i = 4}^{n} b_i} + \sum_{i = 1}^{3} e^{a_i} + \sum_{i = 4}^{n}e^{a_i-b_i} \\
            & \qquad \qquad \qquad\vdots \\
            & > e^{a + \sum_{i = j}^{n} b_i} + \sum_{i = 1}^{j - 1} e^{a_i} + \sum_{i = j}^{n}e^{a_i-b_i} \\
            & \qquad \qquad \qquad\vdots \\
            & > e^{a} + \sum_{i = 1}^{n} e^{a_i}.
        \end{aligned}
    \end{equation}
\end{small}%
\end{proof}

\subsubsection{Proof of Theorem \ref{gradient_theorem}}
\paragraph{}In this part, we provide the detailed proof of Theorem~\ref{gradient_theorem}. Before embarking on the proof we make some preparation, here, we define $\beta$-smooth~\cite{convex_optimization}: 
    
\begin{definition}
    A function $f$ is $\beta$-smooth, if the gradient $\nabla f$ is $\beta$-Lipschitz~\cite{convex_optimization} that is 
    \begin{equation}
        \left\lVert \nabla f\left( x\right)  - \nabla f\left( y\right)  \right\rVert \le \beta \left\lVert x - y \right\rVert .
    \end{equation}
\end{definition}

\begin{proof}\label{proof_theorem}
    For convenient, we abbreviate $\Ln\left( x, y;\theta\right) $ to $\Ln\left( \theta\right) $ and $\F\left( x;\theta\right) $ to $\F\left( \theta\right) $. Then with the smooth assumption and Cauchy-Schwarz inequality, we have 
    \begin{small}
        \begin{equation}\label{theoem_smooth}
            \begin{aligned}
                & \F\left( \theta - \eta \nabla \Ln\left( \theta\right) \right)  - \F\left( \theta\right)  - \left\langle  \nabla \F\left( \theta\right) , - \eta \nabla \Ln\left( \theta\right)\right\rangle    \\
                & = \int_{0}^{1} \left\langle \nabla\F\left( \theta - t \eta \nabla \Ln\left( \theta\right) \right) ,- \eta \nabla \Ln\left( \theta\right)\right\rangle   \mathrm{d}t  \\
                & - \left\langle \nabla \F\left( \theta\right) , - \eta \nabla \Ln\left( \theta\right) \right\rangle  \\
                & = \int_{0}^{1} \left\langle \nabla\F\left( \theta - t \eta \nabla \Ln\left( \theta\right) \right)  - \nabla\F\left( \theta\right) ,- \eta \nabla \Ln\left( \theta\right) \right\rangle  \mathrm{d}t  \\
                & \le \left\lvert \int_{0}^{1} \left\langle \nabla\F\left( \theta - t \eta \nabla \Ln\left( \theta\right) \right)  - \nabla\F\left( \theta\right) ,- \eta \nabla \Ln\left( \theta\right)\right\rangle   \mathrm{d}t  \right\rvert \\
                & \le \int_{0}^{1} \left\lVert \nabla\F\left( \theta - t \eta \nabla \Ln\left( \theta\right)  - \nabla\F\left( \theta\right) \right) \right\rVert   \left\lVert - \eta \nabla \Ln\left( \theta\right) \right\rVert \mathrm{d}t  \\
                & \le \int_{0}^{1} t\beta\eta^2 \left\lVert\nabla \Ln\left( \theta\right) \right\rVert ^2  \mathrm{d}t  \\
                & = \frac{\beta\eta^2}{2} \left\lVert\nabla \Ln\left( \theta\right) \right\rVert ^2 .
            \end{aligned}
        \end{equation}
    \end{small}
    From Eq.~\eqref{theoem_smooth}, it is easy to see that 
    \begin{small}
        \begin{equation}\label{inequality_no_assumption}
            \begin{aligned}
                & \F\left( \theta - \eta \nabla \Ln\left( \theta\right) \right)  - \F\left( \theta\right)  \\
                & \le \frac{\beta\eta^2}{2} \left\lVert\nabla \Ln\left( \theta\right) \right\rVert ^2 - \eta \left\langle \nabla \F\left( \theta\right) ,  \nabla \Ln\left( \theta\right)\right\rangle  .
            \end{aligned}
        \end{equation}
    \end{small}
    And by our assumption on the gradient norm and inner product, the right part of Eq.~\eqref{inequality_no_assumption} becomes
    \begin{small}
        \begin{equation}\label{inequality_assumption}
            \begin{aligned}
                &\frac{\beta\eta^2}{2} \left\lVert\nabla \Ln\left( \theta\right) \right\rVert ^2 - \eta \left\langle \nabla \F\left( \theta\right) ,  \nabla \Ln\left( \theta\right) \right\rangle  \\
                & < \frac{\beta G^2\eta^2}{2} - \eta \left\langle \nabla \F\left( \theta\right) ,  \nabla \Ln\left( \theta\right)\right\rangle  \\
                & < \frac{\beta G^2\eta^2}{2} - \eta \varepsilon .
            \end{aligned}
        \end{equation}
    \end{small}
    Let $g\left( \eta\right)  = \frac{\beta G^2\eta^2}{2} - \eta \varepsilon$. Because $\frac{\beta G^2}{2} > 0$, so $g\left( \eta\right) $ is convex for $\forall \eta \in \mathbb{R}$. Then by convexity of $g$, for our fixed range of learning rate $\eta \in \left( 0, \frac{2\varepsilon}{\beta G^2}\right)$,
    \begin{small}
        \begin{equation}\label{g_convex}
            \begin{aligned}
                g\left( \eta\right)  & = g\left( 0 \left( 1 - \frac{\eta}{\frac{2\varepsilon}{\beta G^2}}\right)  + \frac{\eta}{\frac{2\varepsilon}{\beta G^2}} \frac{2\varepsilon}{\beta G^2}\right)  \\
                & < \left( 1 - \frac{\eta}{\frac{2\varepsilon}{\beta G^2}}\right) g\left( 0\right)  + \frac{\eta}{\frac{2\varepsilon}{\beta G^2}}g\left( \frac{2\varepsilon}{\beta G^2}\right)  \\
                & = 0 .
            \end{aligned}
        \end{equation}
    \end{small}
    Eventually, cooperate Eq.~\eqref{inequality_no_assumption}, Eq.~\eqref{inequality_assumption} and Eq.~\eqref{g_convex}, we proof that
    \begin{small}
        \begin{equation}
            \F\left( \theta - \eta \nabla \Ln\left( \theta\right) \right)  - \F\left( \theta\right)  < 0.
        \end{equation}
    \end{small}%
    
\end{proof}

\end{document}